\newcommand{\enumr}{\begin{enumerate}[label=\roman{*})]}
\newcommand{\enumR}{\begin{enumerate}[label=\Roman{*})]}
\newcommand{\enuma}{\begin{enumerate}[label=\alph{*})]}
\renewcommand{\leq}{\leqslant}
\newcommand{\bE}{\mathbb{E}}
\newcommand{\bP}{\mathbb{P}}
\newcommand{\bT}{T^{o}}
\renewcommand{\geq}{\geqslant}
\newcommand{\qed}{\hfill $\square$}
\newcommand{\cc}{\citet}
\newcommand{\argmax}{\mbox{\,\rm arg\,max}}
\newcommand{\mnk}[1]{{\color{blue} \  #1 }}
\renewcommand{\mnk}[1]{#1}
\newcommand{\supp}{\mathbf{Sp}}
\newcommand{\Assumption}{Condition\ }  
\newcommand{\Assumptions}{Conditions\ }
\newcommand{\assumption}{condition\ } 
\newcommand{\assumptions}{conditions\ } 
\newcommand{\assumptionsc}{conditions,\ } %
 \newcounter{remn}     \setcounter{remn}{1}
 \newcommand{\rem}{{\bf Remark  \arabic{remn}. \stepcounter{remn}}} 
\newtheorem{lemma}{Lemma}
\newtheorem{theorem}{Theorem}
\newtheorem{proposition}{Proposition}
\newcounter{remno} \setcounter{remno}{0}
\newenvironment{proof}{\noindent {\bf Proof. }}{\hfill  \\}
\newfont{\mfoo}{cmssdc10 scaled\magstep1}
\newfont{\mfo}{cmtt9 scaled\magstep1}
\title{Sequential Experimentation Under Generalized Ranking}
\author{{\bf 
Wesley Cowan} \\
   Department of Mathematics, Rutgers University\\
110 Frelinghuysen Rd., Piscataway, NJ 08854 
\and {\bf Michael N. Katehakis}\\
Department of Management Science and Information Systems\\
 100 Rockafeller Road, Piscataway, NJ 08854, USA
}
\begin{document}
\maketitle
\begin{abstract}
We consider the \mnk{classical}  problem of a controller \mnk{activating (or sampling)} sequentially from a finite number of $N \geq 2$ populations, specified by unknown distributions. Over some time horizon, at each time $n = 1, 2, \ldots$, the controller wishes to select a population to sample, with the goal of sampling from a population that optimizes  some ``score'' function of its distribution, e.g., maximizing the expected sum of outcomes or minimizing  variability. We  
define a class of \textit{Uniformly Fast (UF)} sampling policies and  show, under mild regularity conditions, that there is  an asymptotic  lower bound for the expected  total number of sub-optimal population activations. Then, we provide  
sufficient conditions under which a UCB  policy is UF and  asymptotically 
optimal,    since it attains this lower bound.  Explicit solutions are provided for a number of examples of interest, including general score functionals on unconstrained Pareto distributions (of potentially infinite mean), and uniform distributions of unknown support. Additional results on bandits of Normal distributions are also provided.
\end {abstract}
 
{\bf Keywords:} {Upper Confidence Bound, Multi-armed Bandits, Sequential Allocation, Sequential Experimentation} 

\section{Introduction and Summary}
Let $\mathcal{F}$ be a known family of probability densities on $\mathbb{R}$, and let $\supp(f)$ denote the support of $f$ in $\mathbb{R}$. We consider the problem of a controller sequentially sampling from a finite number of $N \geq 2$ populations or ``bandits'', where measurements from population $i$ are specified by an i.i.d. sequence of random variables $\{ X^i_k \}_{ k \geq 1}$ with density $f_i \in \mathcal{F}$. We take each $f_i$ as unknown to the controller - though the controller is taken to have complete (or at least sufficient) knowledge of $\mathcal{F}$. 

It is often of interest  to maximize the rewards achieved from bandits activated by the controller. While this is often framed in terms of activating the bandit with the largest expected value, this paper is motivated largely by the case of bandits possessing densities with potentially infinite expected values. In this setting, if a controller is given a choice between bandits of infinite mean, by what metric should she choose? Should some infinities be ``preferred'' to others? What loss is incurred when a controller activates a bandit of finite mean in place of one of infinite mean? Additionally, focusing primarily on the ``reward'' of a bandit through its expected value would seem to exclude any consideration of commensurate risk. These considerations, and a general interest in more broad applications, motivate a ``generalized score functional'' approach as follows:

Let $s : \mathcal{F} \mapsto \mathbb{R}$ be a ``score'' functional that maps a probability density to a real number, for example $s(f) = \int_{\supp(f)} x f(x) dx$. 
 For a given $\{ f_i \}_{i = 1}^N \subset \mathcal{F}$, let $s^* = \max_i s(f_i)$ be the maximal realized score, and let $S^* = \{ i : s^* = s(f_i)\} ,$
$S^o = \{ i : s^* > s(f_i)\}$  denote respectively 
  the set of optimal, suboptimal,  bandits.
  
For any adaptive, non-anticipatory policy $\pi$, let $\pi(t) = i$ indicate that the controller samples bandit $i$ at time $t$. Define $T^i_\pi(n) = \sum_{t = 1}^n \boldmath{1}\{ \pi(t) = i \}$, denoting the number of times bandit $i$ has been sampled during the periods $t = 1, \ldots, n$ under policy $\pi$. We take, as a convenience, $T^i_\pi(0) = 0$.

In this generalized setting, it is not immediately clear what the `loss' incurred by sub-optimal activations should be. If the score functional $s$ is taken to be the median, for instance, or the measure of the support of a bandit density, what is `lost' when a sub-optimal bandit is activated in place of an optimal bandit? In this paper, we take the following view of regret, simply that activations of optimal bandits cannot be regretted. We are interested then in policies that minimize the activations of non-optimal bandits, for any choice of bandit distributions in $\mathcal{F}$. Let $\bT_{\pi}(n) = \sum_{i \in S^o} T^i_\pi(n)$ be the total number of sub-optimal activations under $\pi$ up to time $n$. The number of sub-optimal activations up to time $n$ grows at most linearly with $n$, hence in keeping with \cc{bkmab96}, a policy $\pi$ is said to be \textit{Uniformly Fast (UF)} if for all $\delta > 0$,
\begin{equation}
\mathbb{E}\left[ \bT_\pi(n) \right] = o(n^{\delta}), \mbox{  for any choice of $\{ f_i \}_{i = 1}^N \subset \mathcal{F}$.}
\end{equation}
The structure of the rest of the paper is as follows: In Section \ref{sec:optimality}, Theorem \ref{thm:lower-bound} establishes an asymptotic lower  bound on the 
expected total number of sub-optimal activations under any UF policy, under 
two reasonable \assumptions on the structure of $\mathcal{F}$ and $s$. Also in Section \ref{sec:optimality} we define a class of policies $\boldmath \pi^*$ (we call UCB-$(\mathcal{F}, s, \tilde{d})$) specified via  a suitable positive sequence $\tilde{d}(k)$ and easily computed indices $u_i(n, t) $,
and  and provide 
conditions under which such a policy  $\boldmath \pi^*$ is UF and asymptotically optimal in the sense that its  sub-optimal activations achieve the   lower  bound of Theorem \ref{thm:lower-bound}. In addition, we point out that finite horizon bounds and estimates of the asymptotic remainder term on the sub-optimal activations of $\pi^*$, can be easily obtained using the results of 
 therein.   In 
Section \ref{sec:weakening} we discuss weaker conditions and approaches that can be employed  when some of the  conditions required for Theorems \ref{thm:lower-bound} and \ref{thm:optimality} do not hold.  We then demonstrate  asymptotically optimal  $\pi^*$ for:   i) the case of Pareto bandits with a general score functional model cf. Section \ref{sec:pareto}; ii) the case of Uniform bandits over (semi) arbitrary bounded support cf. Section \ref{sec:coverage}; iii) the case of Uniform bandits with unknown interval support and a general score functional model cf. Section \ref{sec:uni}. Finally, in Section \ref{sec:normal}, we consider three  models of Normal bandits under specific score functionals of interest, specifically maximizing the expected value, minimizing the variance, as well as maximizing `tail probabilities' $\mathbb{P}( X_i > \kappa)$ for a given known threshold value $\kappa$.

For related work in this area we refer the reader to:
\cc{Rb52}, and additionally  \cc{gittins-79},   \cc{lai85} and \cc{weber1992gittins} there is a large literature on versions of this problem, cf. \cc{burnetas2003asymptotic}, \cc{burnetas1997finite} and references therein.  For  recent work in this area we refer to   \cc{audibert2009exploration}, 
\cc{auer2010ucb}, \cc{gittins2011multi}, \cc{bubeck2012best},  
\cc{cappe2013kullback}, 
\cc{kaufmann14}, 
\cc{2014minimax},  
\cc{cowan2015multi}, \cc{dena2013}, 
 \cc{honda2011asymptotically},    \cc{honda2010}, and \cc{bkk2015s}.
and references therein. 
Other related work includes:     
\cc{bkmdp97}, \cc{butenko2003cooperative}, \cc{optimistic-mdp}, \cc{audibert2009exploration}, \cc{littman2012inducing},\cc{feinberg2014convergence}, \cc{burnetas1993sequencing}, \cc{BKlarge1996}, \cc{lagoudakis2003least},
\cc{bartlett2009regal}, \cc{tekin2012approximately}, \cc{jouini2009multi},
 \cc{dayanik2013asymptotically}, \cc{filippi2010optimism}, \cc{osband2014near},  and references therein.

\section{Optimality and the Structure of $\left(\mathcal{F}, s\right)$}\label{sec:optimality}
For $f, g \in \mathcal{F}$, with $\supp(g) \supset \supp(f)$, the Kullback-Leibler divergence is defined as
\begin{equation}
\mathbf{I}(f, g) = \bE_f \left[ \ln \left( \frac{f(X)}{g(X)} \right) \right]= \int_{ \supp(f) } \ln \left( \frac{f(x)}{g(x)} \right) f(x) dx.
\end{equation}
While $\mathbf{I}$ is not a metric on $\mathcal{F}$, it is frequently useful as a measure of similarity between $f$ and $g$, effectively measuring how difficult it is to mistake data generated from $f$ to be data from $g$. It is worth noting that $\mathbf{I}(f,g) \geq 0$, and $\mathbf{I}(f,g) = 0$ implies $f = g$ almost everywhere. If $f$ assigns positive weight outside the support of $g$, $\mathbf{I}(f,g)$ is taken to be infinite. In practice, for many $\mathcal{F}$ it follows that that $\mathbf{I}(f,g) < \infty$ implies $\supp(f) \subset \supp(g)$.

It is convenient to define the following function:
\begin{equation}
\begin{split}
\mathbb{M}_f(\rho) & = \inf_{g \in \mathcal{F}} \left\{ \mathbf{I}(f,g) : s(g) > \rho \right\}.
\end{split}
\end{equation}
Thinking of $\mathbf{I}$ as a distance metric, $\mathbb{M}_f(\rho)$ effectively measures how far $f$ must be perturbed to be better than $\rho$ under $s$, a sort of Hausdorff distance. The function $\mathbf{M}_f(\rho)$ captures much of the relevant structure of $(\mathcal{F}, s)$ necessary for asymptotically optimal sampling of bandits.

\subsection{The Lower Bound}
We begin by assuming that \Assumptions B1 and B2 below hold for $\mathcal{F}$ and $s$.
\begin{itemize}
\item {\bf \Assumption B1:} For all $f \in \mathcal{F}$, $\rho \in s(\mathcal{F})$, there exists $\tilde{f} \in \mathcal{F}$ with $s(\tilde{f}) > \rho$ and $\mathbf{I}(f, \tilde{f}) < \infty$.
\end{itemize}
This \assumption means that given a set of bandit distributions $\{ f_i \} \subset \mathcal{F}$, and finite data from each, it is almost surely impossible to correctly identify which bandit is the optimal bandit, i.e., with finite data, any sub-optimal bandit might (somewhat) plausibly be mistaken as an optimal bandit in the set. It serves as a `uniform confusion principle', ensuring the universality of the results to follow for any choice of bandit densities $\{ f_i \} \subset \mathcal{F}$. Additionally, note the technical importance of \Assumption 1, ensuring that $\mathbb{M}_{f_i}(s^*)$ are well defined.

\begin{itemize}
\item {\bf \Assumption B2:} The functional $s$ is continuous with respect to $f$ under $\mathbf{I}$.
\end{itemize}
While $\mathbf{I}$ is not a distance metric, a notion of continuity is easy to define in terms of the usual $(\epsilon, \delta)$-definition. This \assumption then essentially states that if $f$ and $g$ differ slightly (with respect to $\mathbf{I}$), their scores can only differ slightly as well. This will easily and immediately be satisfied by most $\mathcal{F}$ and $s$ \mnk{we are} considering.

We have the following result:
\begin{theorem}\label{thm:lower-bound}
Under \Assumptions 1 and 2, for any UF policy $\pi$ and any choice of $\{ f_i \} \subset \mathcal{F}$, the following bound holds for any sub-optimal bandit $i \notin S^*(\{ f_i \}):$
\begin{equation}\label{eqn:lower-bound-suboptimal}
\liminf_n \frac{ \mathbb{E}\left[ T^i_\pi(n) \right]  }{ \ln n } \geq \frac{1}{ \mathbb{M}_{f_i}( s^* ) },
\end{equation}
and hence
\begin{equation}\label{eqn:lower-bound}
\liminf_n \frac{ \mathbb{E}\left[ \bT_\pi(n) \right]  }{ \ln n } \geq \sum_{i \in S^o} \frac{1}{ \mathbb{M}_{f_i}( s^* ) }.
\end{equation}
\end{theorem}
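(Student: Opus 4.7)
The plan is to adapt the classical Lai--Robbins change-of-measure argument to the present score-functional setting. Fix a sub-optimal bandit $i \in S^o$ and $\epsilon > 0$. By \Assumption B1 together with the definition of $\mathbb{M}_{f_i}(s^*)$ as an infimum, I would pick $\tilde{f}_i \in \mathcal{F}$ with $s(\tilde{f}_i) > s^*$ and $\mathbf{I}(f_i,\tilde f_i) < \mathbb{M}_{f_i}(s^*) + \epsilon =: \lambda$. \Assumption B2 plays the auxiliary role of guaranteeing $\mathbb{M}_{f_i}(s^*) > 0$: if there were $g_k \in \mathcal{F}$ with $s(g_k) > s^*$ and $\mathbf{I}(f_i,g_k)\to 0$, continuity of $s$ at $f_i$ under $\mathbf{I}$ would give $s(f_i) \geq s^*$, contradicting sub-optimality of $i$. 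I then build a modified environment with measure $\mathbb{P}^*$ and expectation $\mathbb{E}^*$ in which arm $i$ has density $\tilde f_i$ while all other arms are unchanged; under $\mathbb{P}^*$, arm $i$ is uniquely optimal and every $j\neq i$ is sub-optimal.

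Next I would exploit the UF property under both measures. Under $\mathbb{P}^*$, $\mathbb{E}^*[\sum_{j\neq i} T^j_\pi(n)] = o(n^\delta)$ for every $\delta > 0$, so $\mathbb{E}^*[n - T^i_\pi(n)] = o(n^\delta)$, and Markov's inequality gives $\mathbb{P}^*(T^i_\pi(n) < n/2) = o(n^{\delta-1})$. Meanwhile, under $\mathbb{P}$ the samples $X^i_1,X^i_2,\ldots$ from arm $i$ are i.i.d.\ with density $f_i$, so writing $L_m = \sum_{k=1}^m \ln(f_i(X^i_k)/\tilde f_i(X^i_k))$, the strong law gives $L_m/m \to \mathbf{I}(f_i,\tilde f_i) < \lambda$ almost surely, and a Kolmogorov-type maximal inequality yields $\mathbb{P}\bigl(\max_{m \leq (1-\epsilon)\ln n/\lambda} L_m > (1-\epsilon/2)\ln n\bigr) \to 0$, since $(1-\epsilon)/\lambda \cdot \mathbf{I}(f_i,\tilde f_i) < 1-\epsilon < 1-\epsilon/2$.

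The heart of the argument is the change-of-measure on $C_n := \{T^i_\pi(n) < (1-\epsilon)\ln n /\lambda\}$. I would split $C_n = D_n \cup (C_n\setminus D_n)$ with $D_n := C_n \cap \{L_{T^i_\pi(n)} \leq (1-\epsilon/2)\ln n\}$. Because $\pi$ is non-anticipating, the Radon--Nikodym derivative $d\mathbb{P}/d\mathbb{P}^*$ restricted to the $\sigma$-field generated by all observations through time $n$ equals $\exp(L_{T^i_\pi(n)})$, so
\begin{equation*}
\mathbb{P}(D_n) \;=\; \mathbb{E}^*\!\left[\mathbf{1}_{D_n}\, e^{L_{T^i_\pi(n)}}\right] \;\leq\; n^{1-\epsilon/2}\,\mathbb{P}^*(C_n) \;\leq\; n^{1-\epsilon/2}\,\mathbb{P}^*\!\left(T^i_\pi(n) < n/2\right) \;\to\; 0
\end{equation*}
upon choosing $\delta < \epsilon/2$ in the Markov bound. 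The set $C_n \setminus D_n$ is contained in the event that $L_m$ exceeds $(1-\epsilon/2)\ln n$ for some $m < (1-\epsilon)\ln n/\lambda$, whose $\mathbb{P}$-probability vanishes by the maximal-inequality argument above. Hence $\mathbb{P}(C_n) \to 0$, and Markov gives $\mathbb{E}[T^i_\pi(n)] \geq (1-\mathbb{P}(C_n))(1-\epsilon)\ln n/\lambda$. Dividing by $\ln n$, taking $\liminf$, and then sending $\epsilon \to 0$ yields \eqref{eqn:lower-bound-suboptimal}; summing over $i \in S^o$ yields \eqref{eqn:lower-bound}.

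The step I expect to be the most delicate is the change-of-measure bookkeeping in the adaptive setting. Because $\pi$ is non-anticipating, $T^i_\pi(n)$ is a random (but stopping-time-measurable) sample count for the i.i.d.\ arm-$i$ process, so one must specify the filtration carefully --- the one generated by the collection $\{X^j_1,\ldots,X^j_{T^j_\pi(n)}\}_{j=1}^N$ together with any internal randomization of $\pi$ --- in order to justify the identity $d\mathbb{P}/d\mathbb{P}^* = e^{L_{T^i_\pi(n)}}$ on that $\sigma$-field. Once this bookkeeping is in place the remaining estimates are essentially routine consequences of the strong law and the UF hypothesis.
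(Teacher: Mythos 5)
Your proposal is correct and follows essentially the same route as the paper's own proof: a change of measure to an alternative environment in which the sub-optimal arm is made uniquely optimal, a decomposition of the small-count event according to whether the log-likelihood ratio $L_{T^i_\pi(n)}$ is below $(1-\epsilon/2)\ln n$ (handled by the likelihood-ratio identity, Markov, and the UF property under the new measure) or above it (handled by the strong law applied to $\max_{m\leq b_n} L_m/b_n$), followed by Markov's inequality under the original measure and optimization over the perturbing density. The only cosmetic differences are that you pick $\tilde f_i$ within $\epsilon$ of the infimum $\mathbb{M}_{f_i}(s^*)$ up front rather than taking a supremum over feasible $g$ at the end, and your ``Kolmogorov-type maximal inequality'' is really just the SLLN consequence $\limsup_m \max_{k\leq m}L_k/m \leq \mathbf{I}(f_i,\tilde f_i)$ that the paper itself uses.
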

\begin{proof} Given the above restriction on $\mathcal{F}$, the proof of Eq. \eqref{eqn:lower-bound-suboptimal} proceeds essentially as given in  \cc{bkmab96}. Somewhat technical, and not the focus of the paper, it is relegated to the Appendix. Note that Eq. \eqref{eqn:lower-bound} follows from Eq. \eqref{eqn:lower-bound-suboptimal}, since $\bT_\pi(n) = \sum_{i \in S^o} T^i_\pi(n)$.
\end{proof}

Note that the above result can be applied to bound other loss functions, in particular any linear combination of the activations of sub-optimal bandits, such as the more traditional ``regret'' functions. 

\subsection{Realizing the Bound}
Given this result, it is of interest to construct policies $\pi$, based on knowledge of $\mathcal{F}$ and $s$, that achieve this lower bound, that is $\lim_n \mathbb{E}[ T^i_\pi(n) ]/\ln n = 1/\mathbb{M}_{f_i}(s^*)$ for sub-optimal $i$. These policies are defined to be \textit{Asymptotically Optimal} or \textit{Efficient}, similar to \cc{bkmab96} and \cc{lai85}.

For a given $f \in \mathcal{F}$, let $\hat{f}_t$ be an estimator of $f$ given $t$ i.i.d. samples from $f$. While $\mathbf{I}$ can frequently serve as a similarity measure in $\mathcal{F}$ - for instance, quantifying how close an estimator $\hat{f}_t$ is to $f$ - it is often convenient to consider alternative similarity measures. Let $\mathbf{\nu}$ be a (context-specific) measure of similarity of $\mathcal{F}$; for instance, if $\mathcal{F}$ is parameterized, $\mathbf{\nu}$ might be the $\ell_2$-norm on the parameter space. We restrict $\left(\mathcal{F}, s, \hat{f}_t\right)$ be assuming the following conditions hold, for any $f \in \mathcal{F}$, and all $\epsilon, \delta > 0$,

\begin{itemize}
\item {\bf \Assumption R1:} $\mathbb{M}_f(\rho)$ is continuous with respect to $\rho$, and with respect to $f$ under $\mathbf{\nu}$.
\item {\bf \Assumption R2:} $\mathbb{P}_f( \mathbf{\nu}(\hat{f}_t, f) > \delta ) \leq o(1/t)$.
\item {\bf \Assumption R3:} For some sequence $d_t = o(t)$ (independent of $\epsilon, \delta, f$), $$\mathbb{P}_f( \delta < \mathbb{M}_{ \hat{f}_t }(  s(f) - \epsilon ) ) \leq e^{- \Omega(t)}  e^{-(t-d_t)\delta},$$
where the dependence on $\epsilon$ and $f$ are suppressed into the $\Omega(t)$ term.
\end{itemize}
\Assumption R1 in some sense characterizes the structure of $\mathcal{F}$ as smooth. To the extent that $\mathbb{M}_f(\rho)$ \textit{can} be thought of as a Hausdorff distance on $\mathcal{F}$, \Assumption R1 restricts the ``shape'' of $\mathcal{F}$ relative to $s$. \Assumption R2 is in some sense merely that the estimators $\hat{f}_t$ are ``honest'' and converge to $f$ sufficiently quickly with $t$. \Assumption R3 often seems to be satisfied by $\hat{f}_t$ converging to $f$ sufficiently quickly, as well as $\hat{f}_t$ being ``useful'', in that $s(\hat{f}_t)$ converges sufficiently quickly to $s(f)$. The form of the above bound, while oddly specific in its dependence on $t$ and $\delta$, can be relaxed somewhat, but such a bound frequently seems to exist in practice, for natural choices of $\hat{f}_t$.

In the sequel, for simplicity we will drop the subscript $f$ from $\mathbb{P}_f$,
 when there is no risk for confusion. 
 
 Let $\tilde{d}(t) > 0$ be a non-decreasing function with $\tilde{d}(t) = o(t)$. Define, for any $t$ such that $t > \tilde{d}(t)$, the following index function:
\begin{equation}\label{eqn:index}
u_i(n, t) = \sup_{ g \in \mathcal{F} } \left\{ s(g) :  \mathbf{I}( \hat{f}^i_t, g ) < \frac{\ln n}{t - \tilde{d}(t)}  \right\}.
\end{equation}

For a given $\tilde{d}$, let $n_0 \geq \min\{ n : n > \tilde{d}(n) \}$. We propose the following generic policy:

\textbf{Policy $\boldmath \pi^*$ (UCB-$(\mathcal{F}, s, \tilde{d})$):}
\begin{itemize}
\item[i)] For $n = 1, 2, \ldots, n_0 \times N$, sample each bandit $n_0$ times, and
\item[ii)] for $n \geq n_0 \times N$, sample from bandit $\pi^*(n+1) = \argmax_i u_i\left( n, T^i_{\pi^*}(n) \right)$, breaking ties uniformly at random
\end{itemize}

The following Lemma characterizes the sub-optimal activations of policy $\pi$.

\begin{lemma}\label{lem:bound}
Let $\{ f_i \} \subset \mathcal{F}$ be any choice of bandit densities. Under the above policy, for any sub-optimal $i$ and any optimal $i^*$, the following result holds for any $\epsilon > 0$ such that $s^* - \epsilon > s(f_i)$, and $\delta > 0$ such that $\inf_{g \in \mathcal{F}} \{ \mathbb{M}_g(s^* - \epsilon) : \mathbf{\nu}(g, f_i) \leq \delta \} > 0$:
\begin{equation}
\begin{split}
\mathbb{E}\left[ T^i_{\pi^*}(n) \right] \leq\ & \frac{ \ln n }{ \inf_{g \in \mathcal{F}} \{ \mathbb{M}_g(s^* - \epsilon) : \mathbf{\nu}(g, f_i) \leq \delta \} } + o( \ln n ) \\
& + \sum_{t = n_0 N}^n \mathbb{P}\left( \mathbf{\nu}(\hat{f}^i_t, f_i) > \delta \right) \\
& + \sum_{t = n_0 N}^n \sum_{k = n_0}^t \mathbb{P}\left( u_{i^*}(t, k) \leq s^* - \epsilon \right).
\end{split}
\end{equation}
\end{lemma}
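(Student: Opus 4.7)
The plan is a standard UCB-style decomposition of the sub-optimal activations. I would start from
\[
T^i_{\pi^*}(n) \le n_0 + \sum_{t=n_0 N}^{n-1} \mathbf{1}\{\pi^*(t+1)=i\},
\]
and exploit the greedy selection rule: whenever bandit $i$ is chosen at time $t+1$, we must have $u_i(t, T^i_{\pi^*}(t)) \ge u_{i^*}(t, T^{i^*}_{\pi^*}(t))$ for any fixed optimal arm $i^*$. Inserting the threshold $s^* - \epsilon$, the event $\{\pi^*(t+1)=i\}$ is contained in the union of (a) $\{u_{i^*}(t, T^{i^*}_{\pi^*}(t)) \le s^* - \epsilon\}$; (b) $\{u_i(t, T^i_{\pi^*}(t)) > s^* - \epsilon\} \cap \{\mathbf{\nu}(\hat f^i_{T^i_{\pi^*}(t)}, f_i) > \delta\}$; and (c) $\{u_i(t, T^i_{\pi^*}(t)) > s^* - \epsilon\} \cap \{\mathbf{\nu}(\hat f^i_{T^i_{\pi^*}(t)}, f_i) \le \delta\}$. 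Each case should correspond to one of the three terms in the stated bound.

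For case (a), I would apply a union bound over the time $t$ and the unknown sample count $k = T^{i^*}_{\pi^*}(t) \in \{n_0,\ldots,t\}$, which directly yields the double-sum term. For case (b), the natural bookkeeping is by the realized sample count of bandit $i$: each contribution to (b) increments $T^i_{\pi^*}$, so the total count is dominated by $\sum_{t'=n_0}^{n} \mathbf{1}\{\mathbf{\nu}(\hat f^i_{t'}, f_i) > \delta\}$, whose expectation matches the middle term.

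The main work is case (c), which should produce the leading $\ln n/m(\epsilon,\delta)$ term, where I set $m(\epsilon,\delta) := \inf_{g \in \mathcal{F}}\{\mathbb{M}_g(s^* - \epsilon) : \mathbf{\nu}(g, f_i) \le \delta\}$. Unpacking the definition \eqref{eqn:index}, the event $\{u_i(t, T^i_{\pi^*}(t)) > s^* - \epsilon\}$ gives some $g \in \mathcal{F}$ with $s(g) > s^* - \epsilon$ and $\mathbf{I}(\hat f^i_{T^i_{\pi^*}(t)}, g) < \ln t/(T^i_{\pi^*}(t) - \tilde d(T^i_{\pi^*}(t)))$, so
\[
\mathbb{M}_{\hat f^i_{T^i_{\pi^*}(t)}}(s^* - \epsilon) < \frac{\ln n}{T^i_{\pi^*}(t) - \tilde d(T^i_{\pi^*}(t))}.
\]
On the estimator-good event the left-hand side is bounded below by $m(\epsilon,\delta) > 0$, so rearranging yields $T^i_{\pi^*}(t) - \tilde d(T^i_{\pi^*}(t)) < \ln n/m(\epsilon,\delta)$. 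The step I anticipate as the main obstacle is turning this into the clean bound $T^i_{\pi^*}(t) \le \ln n/m(\epsilon,\delta) + o(\ln n)$, since the correction $\tilde d$ is only known to satisfy $\tilde d(k)=o(k)$. A two-step argument should suffice: for $k$ large enough $\tilde d(k) \le k/2$, which first yields the crude $T^i_{\pi^*}(t) \le 2\ln n/m(\epsilon,\delta)$; feeding this back makes $\tilde d(T^i_{\pi^*}(t)) = o(\ln n)$, delivering the desired refinement. Summing the three case contributions, taking expectations and absorbing the constant $n_0$ into $o(\ln n)$ completes the proof.
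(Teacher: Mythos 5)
Your proposal is correct and follows essentially the same route as the paper's proof: the same three-way decomposition of the times bandit $i$ is chosen (index of an optimal arm falling below $s^*-\epsilon$, estimator for $i$ straying beyond $\delta$, and the remaining ``good'' case), the same increment-counting bookkeeping for the middle term and the union bound over $(t,k)$ for the optimal-arm term, and the same reduction of the good case to $T^i_{\pi^*}(t)-\tilde d(T^i_{\pi^*}(t)) \le \ln n / \inf_g\{\mathbb{M}_g(s^*-\epsilon):\mathbf{\nu}(g,f_i)\le\delta\}$. Your two-step bootstrap for absorbing $\tilde d$ is precisely how the paper's claim $\max\{T: T-\tilde d(T)\le C\}\le C+O(\tilde d(C))$ is justified, so no gap remains.
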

\begin{proof}
The proof is given in the Appendix.
\end{proof}

This leads to the following theorem:
\begin{theorem}\label{thm:optimality}
Let $(\mathcal{F}, s, \hat{f}_t)$ satisfy \Assumptions B1, B2 \& R1 - R3.
Let $d$ be as in \Assumption R3. If $\tilde{d}(t) - d_t \geq \Delta > 0$ for some $\Delta$, for all $t$, then $\pi^*$ is asymptotically optimal. That is, the following holds: For any $\{ f_i \} \subset \mathcal{F}$, for any sub-optimal $i$,
\begin{equation}
\lim_n \frac{ \mathbb{E}\left[ T^i_{\pi^*}(n) \right] }{ \ln n } =  \frac{ 1 }{ \mathbb{M}_{f_i}( s^* ) } .
\end{equation}
\end{theorem}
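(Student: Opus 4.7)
The plan is to show that the three-term upper bound delivered by Lemma~\ref{lem:bound} can be reduced to $\ln n / \mathbb{M}_{f_i}(s^*) + o(\ln n)$, which matches the lower bound of Theorem~\ref{thm:lower-bound} and therefore forces the stated limit. First I would fix a sub-optimal $i$, an optimal $i^*$, and arbitrary admissible $\epsilon,\delta > 0$, and verify separately that the ``$\nu$-concentration'' sum and the ``index-underestimation'' double sum in Lemma~\ref{lem:bound} are both $o(\ln n)$; then I would pass $\epsilon,\delta \downarrow 0$ in the remaining first term using Condition R1.

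The $\nu$-concentration sum $\sum_t \mathbb{P}(\mathbf{\nu}(\hat{f}^i_t,f_i) > \delta)$ is immediately $o(\ln n)$ by Condition R2, since partial sums of an $o(1/t)$ sequence are $o(\ln n)$.

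The main obstacle is the double sum $\sum_{t,k}\mathbb{P}(u_{i^*}(t,k) \le s^*-\epsilon)$. By the definition~\eqref{eqn:index}, the underlying event is exactly $\{\mathbb{M}_{\hat{f}^{i^*}_k}(s^*-\epsilon) \ge \ln t / (k - \tilde{d}(k))\}$, so Condition R3 (applied with $f = f_{i^*}$, noting $s(f_{i^*}) = s^*$, and threshold $\ln t/(k-\tilde{d}(k))$) gives
\[
\mathbb{P}(u_{i^*}(t,k) \le s^*-\epsilon) \le e^{-\Omega(k)}\, t^{-(k-d_k)/(k-\tilde{d}(k))}.
\]
The hypothesis $\tilde{d}(t) - d_t \ge \Delta > 0$ is precisely what is needed to push the $t$-exponent strictly above $1$, since $(k-d_k)/(k-\tilde{d}(k)) \ge 1 + \Delta/(k-\tilde{d}(k))$. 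The inner $t$-sum for each fixed $k$ then converges at rate $O(k/\Delta)$, after which the exponential factor $e^{-\Omega(k)}$ makes the outer $k$-sum convergent; the whole double sum is thus $O(1)$, well within $o(\ln n)$.

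It remains to pass $\epsilon,\delta \downarrow 0$ in the first term of Lemma~\ref{lem:bound}. Joint continuity of $\mathbb{M}_g(\rho)$ in $(g,\rho)$ from Condition R1 yields $\inf_g\{\mathbb{M}_g(s^*-\epsilon) : \mathbf{\nu}(g,f_i) \le \delta\} \to \mathbb{M}_{f_i}(s^*)$, giving $\limsup_n \mathbb{E}[T^i_{\pi^*}(n)]/\ln n \le 1/\mathbb{M}_{f_i}(s^*)$. This $O(\ln n)$ bound also certifies that $\pi^*$ is UF, so Theorem~\ref{thm:lower-bound} applies and supplies the matching $\liminf$, closing the sandwich.
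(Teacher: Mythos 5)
Your proposal is correct and follows essentially the same route as the paper: the bound of Lemma~\ref{lem:bound}, Condition R2 for the $\nu$-concentration sum, Condition R3 applied at $f_{i^*}$ with threshold $\ln t/(k-\tilde{d}(k))$ for the index term, continuity (R1) to send $\epsilon,\delta \downarrow 0$, and the observation that the resulting $O(\ln n)$ bound makes $\pi^*$ UF so that Theorem~\ref{thm:lower-bound} supplies the matching lower bound. The only deviation is bookkeeping on the double sum, where you exchange the order of summation to get an $O(1)$ bound directly, while the paper bounds the inner $k$-sum by $\tfrac{1}{t}O(1/\ln t)$ via Proposition~\ref{prop:sum-bound} and sums to $O(\ln\ln n)$; both are valid and either is $o(\ln n)$ as required.
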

\begin{proof}
For sub-optimal $i$, there trivially exist feasible $\epsilon$ as in Lemma \ref{lem:bound}. By the continuity of $s$ with respect to $\mathbf{I}$, $\mathbb{M}_f(\rho) > 0$ for all $\rho > s(f)$. It follows from this, and the continuity of $\mathbb{M}_f(\rho)$ with respect to $f$ under $\mathbf{\nu}$ that all sufficiently small $\delta > 0$ are feasible. Let $\epsilon, \delta$ be feasible as in Lemma \ref{lem:bound}.
Note, by \Assumption 4,
\begin{equation}
\sum_{t = 1}^n \mathbb{P}( \mathbf{\nu}( \hat{f}^i_t, f ) > \delta ) \leq \sum_{t = 1}^n o(1/t) \leq o(\ln n).
\end{equation}
Similarly, by \Assumption 5, for $k \geq n_0$,
\begin{equation}
\begin{split}
\mathbb{P}\left( u_{i^*}(t, k) \leq s^* - \epsilon \right) & = \mathbb{P}\left(\sup_{ g \in \mathcal{F} } \left\{ s(g) :  \mathbf{I}( \hat{f}^{i^*}_k, g ) < \frac{\ln t}{k - \tilde{d}(k)}  \right\} \leq s^* - \epsilon \right) \\
& \leq \mathbb{P}\left( \inf_{g \in \mathcal{F} } \left\{ \mathbf{I}(  \hat{f}^{i^*}_k, g ) : s(g) > s^* - \epsilon \right\} > \frac{\ln t}{k - \tilde{d}(k)} \right) \\
& \leq e^{-\Omega(k)}e^{-(k-d(k))\frac{\ln t}{k - \tilde{d}(k)}} \\
& = \frac{1}{t} t^{-\frac{\tilde{d}(k) - d_k}{k - \tilde{d}(k)}} e^{-\Omega(k)}.
\end{split}
\end{equation}
Hence,
\begin{equation}
\sum_{k = n_0}^t \mathbb{P}\left( u_{i^*}(t, k) \leq s^* - \epsilon \right) \leq \sum_{k = n_0}^t \frac{1}{t} t^{-\frac{\tilde{d}(k) - d_k}{k - \tilde{d}(k)}} e^{-\Omega(k)} \leq \frac{1}{t}  \sum_{k = 1}^\infty t^{-\frac{\Delta}{k-\tilde{d}(k)}} e^{-\Omega(k)} \leq \frac{1}{t} O(1/\ln t).
\end{equation}
The last step is proven as Proposition \ref{prop:sum-bound} in the Appendix. From Lemma \ref{lem:bound},
\begin{equation}
\begin{split}
\mathbb{E}\left[ T^i_{\pi^*}(n) \right] & \leq \frac{ \ln n }{ \inf_{g \in \mathcal{F}} \{ \mathbb{M}_g(s^* - \epsilon) : \mathbf{\nu}(g, f_i) \leq \delta \} }  + \sum_{t = 1}^n \frac{1}{t}O(1/\ln t) + o(\ln n) \\
& = \frac{ \ln n }{ \inf_{g \in \mathcal{F}} \{ \mathbb{M}_g(s^* - \epsilon) : \mathbf{\nu}(g, f_i) \leq \delta \} }  + O(\ln \ln n) + o( \ln n ).
\end{split}
\end{equation}
Hence it follows,
\begin{equation}
\limsup_n \frac{ \mathbb{E}\left[ T^i_{\pi^*}(n) \right] }{ \ln n } \leq \frac{ 1 }{ \inf_{g \in \mathcal{F}} \{ \mathbb{M}_g(s^* - \epsilon) : \mathbf{\nu}(g, f_i) \leq \delta \} }.
\end{equation}
By the continuity of $\mathbb{M}$, minimizing the above bound first with respect to $\delta$, then $\epsilon$, yields
\begin{equation}
\limsup_n \frac{ \mathbb{E}\left[ T^i_{\pi^*}(n) \right] }{ \ln n } \leq \frac{ 1 }{ \mathbb{M}_{f_i}(s^*) }.
\end{equation}
By \Assumptions B1 and B2, the proof is completed via the lower bound from Theorem \ref{thm:lower-bound}.
\qed
\end{proof}

For a specific $\mathcal{F}$ and score functional $s$, verifying $\pi^*$ as optimal is reduced to verifying the B-\Assumptions and R-\Assumptions for appropriate choice of estimator $\hat{f}_t$. \Assumptions B1, B2, and R1 are generally easy to verify. In particular, \Assumption R1 seems to follow generally in the case of parameterized $\mathcal{F}$, when $\mathbf{\nu}(f,g)$ depends smoothly on the parameters of $f$ and $g$. \Assumption R2 generally seems to follow for natural estimators. The difficulty often lies in verifying \Assumption R3. 

The   focus of this paper is in demonstrating asymptotic optimality in the spirit of Theorem \ref{thm:optimality}. However, we note that Theorem \ref{thm:optimality} is essentially just an asymptotic upper bound on the results of Lemma \ref{lem:bound}. For specific models, the bounds of Lemma \ref{lem:bound} can be computed more precisely, yielding finite horizon bounds and estimates of the asymptotic remainder term on the sub-optimal activations of $\pi^*$.

\subsection{Weakened \Assumptions and Heterogeneous Bandits}\label{sec:weakening}

\Assumptions B1, B2, \& R1 - R3 above were constructed in such a way as to make the results that followed as universal as possible, relative to the choices of bandit distributions. This has the advantage that the controller may be guaranteed the above results, independent of the specific choice of bandit distributions she is faced with.

However, in some situations, the \assumptions as above may be restrictive. For example, \Assumption B1 precludes any choice of $\mathcal{F}$ and $s$ where the score functional has an attainable maximum over $\mathcal{F}$. This may occur for instance, taking $s(f)$ as the probability that a random variable with density $f$ is greater than or equal to $\kappa$, $s_\kappa(f) = \int_\kappa^{\infty} f(x)dx$, if $\mathcal{F}$ contains densities supported strictly in the interval $[\kappa,\infty)$. In this case, \Assumption B1 would not hold, and the results of Theorems \ref{thm:lower-bound} and \ref{thm:optimality} would not hold.

In such a case, a controller might consider one of two options: In the first, the controller might consider the problem defined over a smaller family of distributions $\mathcal{F}^\prime \subset \mathcal{F}$ where \Assumption B1 could be shown to hold - for instance, $\mathcal{F}^\prime$ might exclude elements of $\mathcal{F}$ that achieve the maximum of $s$. This might be justified in that, given finite samples, the controller might not be able to distinguish a given density in $\mathcal{F}$ from some density in $\mathcal{F}^\prime$.

An alternative though is to consider a less restrictive set of \assumptionsc with less universal results. For instance, the lower bound of Eq. (\ref{eqn:lower-bound-suboptimal}) can be shown to hold for any Uniformly Fast policy, for any set of bandit distributions $\{ f_i \} \subset \mathcal{F}$ that satisfy:

\begin{itemize}
\item {\bf \Assumption $\tilde{\text{B}}$1:} For any sub-optimal $f_j \in \{ f_i \}$, i.e., $s(f_j) \neq s^*(\{ f_i \})$, there exists some $\tilde{f}_j \in \mathcal{F}$ such that $s(\tilde{f}_j) > s^*( \{ f_i \} )$, and $\mathbf{I}(f_j, \tilde{f}_j) < \infty$.
\end{itemize}

This may not hold for all choices of bandit distributions, in a given context, but it may hold for most and in that sense the lower bound of Theorem \ref{thm:lower-bound} might be ``almost universal'' for that choice of $\mathcal{F}$ and $s$. Additionally, in proving Theorem \ref{thm:lower-bound}, \Assumption B2 may be weakened in the following way:
\begin{itemize}
\item {\bf \Assumption $\tilde{\text{B}}$2:} For $f, g \in \mathcal{F}$, if $\mathbf{I}(f, g) = 0$, then $s(f) = s(g)$.
\end{itemize}

However, the continuity of $s$ relative to $\mathbf{I}$ seems necessary for demonstrating the optimality of $\pi^*$, hence \Assumption $\tilde{\text{B}}$2 will not be considered.

\Assumptions R1 and R2 seem fairly natural by themselves and frequently satisfied. The main hurdle in proving the optimality of policy $\pi^*$ as above is \Assumption R3. This may be weakened slightly, in the following way:

\begin{itemize}
\item {\bf \Assumption $\tilde{\text{R}}$3:}  For each $i$, $\sum_{k = 1}^t \mathbb{P}( u_i(t,k) < s(f_i) - \epsilon ) = o(1/t)$.
\end{itemize}
While the order imposed by \Assumption R3 is much stronger than that imposed by $\tilde{\text{R}}3$ above, \Assumption R3 seems to be frequently satisfied, as evidenced by the examples given in the remainder of the paper. Further, \Assumption $\tilde{\text{R}}$3 can be derived from \Assumption R3.

Another way the previous results can be extended is through a heterogeneous bandit model, i.e., the density of bandit $i$ is chosen from some family of densities $\mathcal{F}_i$, $\mathcal{F}_i$ unrelated to $\mathcal{F}_j$ for $i \neq j$. We additionally may equip each individual bandit space with its own score functional $s_i$. In such a model, while specific bandit densities may be unknown, a controller may model information known about individual bandits, e.g., known or assumed parameters. $i$-Specific analogs of \Assumptions B2, R1, R2, R3 may be constructed, for instance with an $i$-specific function $\mathbb{M}^i_f(\rho)$. It is useful to generalize \Assumption B1 in the following way:

\begin{itemize}
\item {\bf \Assumption B$\text{1}^\prime$:} For any choice of bandit distributions $( f_i )_{i = 1}^N \in \bigotimes_{i = 1}^N \mathcal{F}_i$, for each sub-optimal $f_j$, i.e. $s_j(f_j) \neq s^*\left( ( f_i )_{i = 1}^N \right)$, there exists some $\tilde{f}_j \in \mathcal{F}_j$ such that $s_j(\tilde{f}_j) > s^*\left( ( f_i )_{i = 1}^N \right)$ and $\mathbf{I}(f_j, \tilde{f}_j) < \infty$.
\end{itemize}

The results of Theorems \ref{thm:lower-bound} and \ref{thm:optimality} generalize accordingly.

\section{The Pareto Model and Separable Score Functions}\label{sec:pareto}
In this section, we consider a model that demonstrates the utility of this generalized score functional approach. We take $\mathcal{F} = \mathcal{F}_\ell$, for $\ell \geq 0$, as the family of Pareto distributions defined by:
\begin{equation}
\mathcal{F}_\ell = \left\{ f_{\alpha, \beta}(x) = \frac{ \alpha \beta^\alpha}{x^{1 + \alpha}} : \alpha > \ell, \beta > 0 \right\}.
\end{equation}
Taking $X$ as distributed according to $f_{\alpha, \beta} \in \mathcal{F}_\ell$, e.g., $X \sim \text{Pareto}(\alpha,\beta)$, $X$ is distributed over $[\beta, \infty)$, with $\bE[X] = \alpha \beta / (\alpha - 1)$ if $\alpha > 1$, and $\bE[ X ]$ as infinite or undefined if $\alpha \leq 1$. We are particularly interested in $\mathcal{F}_0$, the family of unrestricted Pareto distributions, and $\mathcal{F}_1$, the family of Pareto distributions with finite means.

Taking the general goal of obtaining large rewards from the bandits activated, there are two effects of interests: rewards from a given bandit will be biased towards larger values for increasing $\beta$ and decreasing $\alpha$. Hence, any score function $s(\alpha, \beta) = s(f_{\alpha, \beta})$ of interest should be an increasing (or at least non-decreasing) function of $\beta$, and a decreasing (or at least non-increasing function of $\alpha$. In particular, we restrict our attention to score functions that are ``separable'' in the sense that
\begin{equation}\label{Eq:sab}
s(\alpha, \beta) = a(\alpha)b(\beta),
\end{equation}
where we take $a$ to be a positive, continuous, decreasing, invertible function of $\alpha$ for $\alpha > \ell$, and $b$ to be a positive, continuous, non-decreasing function of $\beta$.

\rem\label{3scores}\ 
This general Pareto model of Eq. (\ref{Eq:sab}) includes several natural score functions of interest, in particular:
\begin{itemize}
\item[i)] In the case of the restricted Pareto distributions with finite mean, we may take $s$ as the expected value, and  $s(\alpha, \beta) = \alpha \beta/(\alpha - 1)$, with $a(\alpha) = \alpha/(\alpha - 1)$ and $b(\beta) = \beta$. 
\item[ii)] In the case of unrestricted Pareto distributions, various asymptotic considerations give rise to considering the score function $s(\alpha, \beta) = 1/\alpha$, i.e., the controller attempts to find the bandit with minimal $\alpha$. In this case, $a(\alpha) = 1/\alpha$ and $b(\beta) = 1$. This arises for instance in comparing the asymptotic tail distributions of bandits, $\bP(X \geq k)$ as $k \to \infty$, or the conditional restricted expected values, $\bE[ X | X \leq k]$ as $k \to \infty$. 
\item[iii)] A third score function to consider is the median, defined over unrestricted Pareto distributions, with $s(\alpha, \beta) = \beta 2^{1/\alpha}$, taking $a(\alpha) = 2^{1/\alpha},\ b(\beta) = \beta$.
\end{itemize}

Given the above special cases, it is convenient to take the assumption when operating over $\mathcal{F}_\ell$ that $a(\alpha) \to \infty$ as $\alpha \to \ell$. This has the advantage additionally of guaranteeing that \Assumption B1 is satisfied for this choice of score function $s$ over $\mathcal{F}_\ell$.

For $f = f_{\alpha, \beta} \in \mathcal{F}_\ell$, and $t$ many i.i.d. samples under $f$, take the estimator $\hat{f}_t = f_{\hat{\alpha}_t, \hat{\beta}_t}$ where
\begin{equation}\label{eqn:estimators}
\begin{split}
\hat{\beta}_t & = \min_{n = 1, \ldots, t} X_n, \\
\hat{\alpha}_t & = \frac{t-1}{ \sum_{n = 1}^t \ln\left( \frac{X_n}{\hat{\beta}_t} \right)}.
\end{split}
\end{equation}

At various points in what follows, it is convenient to define the following functions, $L^{+}(\delta),\ L^{-}(\delta)$, as respectively the smallest and largest positive solutions to $L - \ln L - 1 = \delta$ for $\delta \geq 0$. In particular, $L^{-}(\delta)$ may be expressed in terms of the Lambert-$W$ function, $L^{-}(\delta) = - W(e^{-1-\delta})$, taking $W(x)$ be the principal solution to $We^W = x$ for $x \in [-1/e, \infty)$. An important property will be that $L^{\pm}(\delta)$ is continuous as a function of $\delta$, and $L^{\pm}(\delta) \to 1$ as $\delta \to 0$.

Given the above, we may define the following policy as a specific instance of policy $\pi^*$ under this model:

\textbf{Policy $\boldmath \pi^*_{\text{P,s}}$ (UCB-PARETO)}
\begin{itemize}
\item[i)] For $n = 1, 2, \ldots 3N$, sample each bandit $3$ times, and
\item[ii)] for $n \geq 3N$, sample from bandit $\pi^*_{\text{P,s}}(n+1) = \argmax_i u_i\left( n, T^i_{\pi^*_{\text{P,s}}}(n) \right)$ breaking ties uniformly at random, where
\begin{equation}
u_i(n, t) = \begin{cases} \infty &\mbox{if } \hat{\alpha}^i_t L^{-}\left( \frac{\ln n}{t - 2} \right) \leq \ell \\ 
 b\left( \hat{\beta}^i_{ t } \right) a\left( \hat{\alpha}^i_t L^{-}\left( \frac{\ln n}{t - 2} \right)\right) & else. \end{cases}
 \end{equation}
\end{itemize}

\begin{theorem}
Policy $\pi^*_{\text{P,s}}$ as defined above is asymptotically optimal.  In particular, for any choice of $\{ f_i = f_{\alpha_i, \beta_i} \} \subset \mathcal{F}_\ell$, with $s^* = \max_i s(\alpha_i, \beta_i) = \max_i a(\alpha_i) b(\beta_i)$, for each sub-optimal bandit $i$ the following holds:
\begin{equation}\label{eqn:pareto-limit}
\lim_n \frac{ \mathbb{E}\left[ T^i_{\pi^*_{\text{P,s}}}(n) \right] }{ \ln n } = \frac{ 1 }{ \frac{1}{\alpha_i} a^{-1}\left( \frac{s^*}{ b(\beta_i) } \right) - \ln\left( \frac{1}{\alpha_i} a^{-1}\left( \frac{s^*}{ b(\beta_i) } \right) \right) -1 }.
\end{equation}
\end{theorem}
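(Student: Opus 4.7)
The plan is to identify $\pi^*_{\text{P,s}}$ as a specific instance of the generic UCB-$(\mathcal{F}_\ell, s, \tilde{d})$ policy of Section \ref{sec:optimality} with $\tilde{d}(t) \equiv 2$, and then invoke Theorem \ref{thm:optimality}. The starting point is the explicit KL divergence between two Pareto densities. Using $\bE_{f_{\alpha,\beta}}[\ln X] = \ln\beta + 1/\alpha$, a direct calculation gives, for $\beta' \leq \beta$,
\begin{equation*}
\mathbf{I}(f_{\alpha,\beta}, f_{\alpha',\beta'}) = \ln(\alpha/\alpha') + \alpha'\ln(\beta/\beta') + \alpha'/\alpha - 1,
\end{equation*}
and $+\infty$ when $\beta' > \beta$. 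Writing $u := \alpha'/\alpha$, this factors as $(u - \ln u - 1) + \alpha'\ln(\beta/\beta')$, a sum of two non-negative terms.

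Next, I compute $\mathbb{M}_{f_i}(s^*)$ and the UCB index in parallel. Both problems force $\beta' \leq \beta_i$ (resp.\ $\beta' \leq \hat{\beta}^i_t$), and since $b$ is non-decreasing while the extra term $\alpha'\ln(\beta/\beta') \geq 0$ only enlarges $\mathbf{I}$, the optimizer satisfies $\beta' = \beta_i$ (resp.\ $\hat{\beta}^i_t$). The score constraint then reduces to a single threshold on $\alpha'$; the function $u \mapsto u - \ln u - 1$ is decreasing on $(0,1)$ and minimized over $u < u^*$ at the endpoint whenever $u^* < 1$, which is the case for sub-optimal $i$ since $s(f_i) < s^*$ forces $\alpha_i > \alpha^* := a^{-1}(s^*/b(\beta_i))$. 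This gives $\mathbb{M}_{f_i}(s^*) = u^* - \ln u^* - 1$ with $u^* = \alpha^*/\alpha_i = \frac{1}{\alpha_i} a^{-1}(s^*/b(\beta_i))$, matching the denominator in Eq. \eqref{eqn:pareto-limit}. Run in reverse for the supremum defining the index, the same monotonicity analysis yields $u_i(n,t) = b(\hat{\beta}^i_t)\,a\bigl(L^-(\ln n/(t-2))\hat{\alpha}^i_t\bigr)$ whenever this stays in $\mathcal{F}_\ell$, with $u_i = +\infty$ otherwise (reflecting $a(\alpha) \to \infty$ at $\alpha = \ell$). This reproduces $\pi^*_{\text{P,s}}$ with $\tilde{d}(t) \equiv 2$.

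What remains is to verify the conditions. B1 holds because $a(\alpha) \to \infty$ as $\alpha \to \ell$ allows any score threshold to be met with $\beta' = \beta$ and $\alpha'$ close to $\ell$ at finite $\mathbf{I}$; B2 follows from the continuity of $a,b$ together with the observation that small $\mathbf{I}$ forces $(\alpha',\beta')$ close to $(\alpha,\beta)$; R1 is immediate from the closed form above and continuity of $a, a^{-1}, b$. For R2, with $\nu$ the Euclidean parameter distance, I would exploit the classical Pareto facts that $\mathbb{P}(\hat{\beta}_t - \beta > \delta) = \bigl(\beta/(\beta+\delta)\bigr)^{\alpha t}$ and that $\hat{\alpha}_t = (t-1)/S$ with $S \sim $ Gamma$(t-1,\alpha)$ independent of $\hat{\beta}_t$, so standard Chernoff bounds produce deviations of order $o(1/t)$. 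The main obstacle is R3: one needs a bound of the form $\mathbb{P}(\mathbb{M}_{\hat{f}_t}(s(f)-\epsilon) > \delta) \leq e^{-\Omega(t)} e^{-(t-d_t)\delta}$ with $d_t = 1$, so that $\tilde{d}(t) - d_t \geq 1 = \Delta$ as required in Theorem \ref{thm:optimality}. Reducing $\mathbb{M}_{\hat{f}_t}$ to its closed form $\hat{u} - \ln \hat{u} - 1$ in the natural parameter and applying the exact moment generating function of $\alpha S \sim $ Gamma$(t-1,1)$ produces precisely such an exponential bound, with the shape parameter $t-1$ being the origin of the shift $d_t = 1$ and thus of $\tilde{d}(t) = 2$ in the policy definition. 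Granted R3, Theorem \ref{thm:optimality} delivers the claimed limit.
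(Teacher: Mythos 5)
Your proposal is correct and follows essentially the same route as the paper: derive the explicit Pareto forms of $\mathbf{I}$ and $\mathbb{M}_f$, identify $\pi^*_{\text{P,s}}$ as UCB-$(\mathcal{F}_\ell, s, \tilde{d})$ with $\tilde{d}(t)=2$, verify B1, B2, R1--R3 via the distributions of $\hat{\alpha}_t,\hat{\beta}_t$ and Gamma Chernoff bounds with $d_t=1$, and invoke Theorem \ref{thm:optimality}. The only cosmetic deviations are your choice of the Euclidean parameter distance for $\mathbf{\nu}$ (the paper uses $\mathbf{\nu}=\mathbf{I}$, which forces its aside about embedding $\mathcal{F}_\ell$ in $\mathcal{F}_0$) and the fact that your R3 step is a sketch; to complete it as the paper does, you would first use $\hat{\beta}_t \geq \beta$ and the monotonicity of $b$ to eliminate $\hat{\beta}_t$ from the event before applying the Gamma$(t-1,1)$ tail bound.
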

\begin{proof}
It suffices to verify \Assumptions B1, B2, \& R1-R3 for the indicated Pareto model. To begin, it can be shown that
\begin{equation}\label{eqn:a-and-i-pareto}
\begin{split}
\mathbf{I}( f_{\alpha, \beta}, f_{\tilde{\alpha}, \tilde{\beta}} ) & =
\begin{cases}
\frac{\tilde{\alpha}}{\alpha} - \ln\left( \frac{\tilde{\alpha}}{\alpha}\right) -1 + \tilde{\alpha}\ln\left( \frac{\beta}{\tilde{\beta}}\right) &\mbox{if } \tilde{\beta} \leq \beta \\ 
\infty & \mbox{else,}
\end{cases}\\
\mathbb{M}_{f_{\alpha, \beta}}(\rho) & = 
\begin{cases}
\frac{1}{\alpha} a^{-1}\left( \frac{\rho}{ b(\beta) } \right) - \ln\left( \frac{1}{\alpha} a^{-1}\left( \frac{\rho}{ b(\beta) } \right) \right) -1 & \mbox{if } \rho > s(\alpha, \beta) \\
0 & \mbox{ else.}
\end{cases}.
\end{split}
\end{equation}
Given the above, \Assumption B1 is easy to verify given the structure of the score function. Additionally, note that $\mathbf{I}( f_{\alpha, \beta}, f_{\tilde{\alpha}, \tilde{\beta}} ) < \delta$ implies that
\begin{equation}
\begin{split}
\tilde{\beta} & \leq \beta \\
\frac{\tilde{\alpha}}{\alpha} - \ln\left( \frac{\tilde{\alpha}}{\alpha}\right) -1 & \leq \delta \\
\tilde{\alpha}\ln\left( \frac{\beta}{\tilde{\beta}}\right) & \leq \delta.
\end{split}
\end{equation}
The above gives us that $\alpha L^{-}(\delta) \leq \tilde{\alpha} \leq \alpha L^{+}(\delta)$ and $\beta e^{-\alpha \delta L^{+}(\delta)} \leq \tilde{\beta} \leq \beta$. Given that $\delta L^{+}(\delta) \to 0$ as $\delta \to 0$, these bounds and the continuity of $a, b$, give the continuity of $s$ with respect to $\mathbf{I}$, verifying \Assumption B2. 

In verifying \Assumptions R1 - R3, it is convenient to take as similarity measure on $\mathcal{F}_\ell$, $\mathbf{\nu} = \mathbf{I}$. \Assumption R1 is then easily verified, the continuity of $\mathbb{M}_f(\rho)$ with respect to $\rho$ from the above formula, and the continuity with respect to $f$ under $\mathbf{I}$ from the previous bounds.

In verifying \Assumption R2, it is interesting to note that for $\ell > 0$, the estimator $\hat{f}_t = f_{\hat{\alpha}_t, \hat{\beta}_t}$ of $f = f_{\alpha, \beta}$ may not be in $\mathcal{F}_\ell$ even if $f$ is, i.e., even if $\alpha > \ell$, there is no immediate guarantee that $\hat{\alpha}_t$ is. Hence, $\mathbf{I}(\hat{f}_t, f)$ may not be well defined over $\mathcal{F}_\ell$. However, this is not a serious issue as in the case that $\ell > 0$, we may view this as embedded in $\mathcal{F}_0$, which will contain $\hat{f}_t$, and hence allow us to compute $\mathbf{I}(\hat{f}_t, f)$. Hence, for $\delta > 0$, since $\hat{\beta}_t \geq \beta$,
\begin{equation}
\begin{split}
\mathbb{P}\left( \mathbf{I}(\hat{f}_t, f) > \delta \right) & = \mathbb{P}\left( \frac{\alpha}{\hat{\alpha}_t} - \ln\left( \frac{\alpha}{\hat{\alpha}_t}\right) -1 + \alpha\ln\left( \frac{\hat{\beta}_t}{\beta}\right) > \delta \right) \\
& \leq \mathbb{P}\left( \frac{\alpha}{\hat{\alpha}_t} - \ln\left( \frac{\alpha}{\hat{\alpha}_t}\right) -1  > \frac{\delta}{2} \right) + \mathbb{P}\left(\alpha\ln\left( \frac{\hat{\beta}_t}{\beta}\right) > \frac{\delta}{2} \right) \\
& = \mathbb{P}\left( \frac{\alpha}{\hat{\alpha}_t} < L^{-}\left( \frac{\delta}{2} \right) \right) + \mathbb{P}\left( \frac{\alpha}{\hat{\alpha}_t} > L^{+}\left( \frac{\delta}{2} \right) \right)  + \mathbb{P}\left(\frac{\hat{\beta}_t}{\beta} > e^{\frac{\delta}{2\alpha}} \right).
\end{split}
\end{equation}
At this point, we make use of the following result, characterizing the distributions of $\hat{\alpha}_t$ and $\hat{\beta}_t$:
\begin{lemma}\label{lem:estimators}
With $\hat{\alpha}_t, \hat{\beta}_t$ as in Eq. \eqref{eqn:estimators}, $\hat{\alpha}_t$ and $\hat{\beta}_t$ are independent, with
\begin{equation}
\begin{split}
\frac{\alpha}{\hat{\alpha}_t}(t-1) & \sim \text{Gamma}(t-1,1),\\
\frac{\hat{\beta}_t}{\beta} & \sim \text{Pareto}(\alpha t, 1).
\end{split}
\end{equation}
\end{lemma}
The proof is given in the Appendix.

It follows, letting $G_t \sim \text{Gamma}(t,1)$,
\begin{equation}
\mathbb{P}\left( \mathbf{I}(\hat{f}_{t}, f) > \delta \right) \leq \mathbb{P}\left( G_{t-1} < (t-1) L^{-}\left( \delta/2 \right) \right) + \mathbb{P}\left( G_{t-1} > (t-1)L^{+}\left( \delta/2 \right) \right)  + e^{-\frac{\delta}{2}t}.
\end{equation}
Here we apply the following result, bounding the tails of the Gamma distributions:
\begin{lemma}\label{lem:gamma-bound}
Let $G_t \sim \text{Gamma}(t,1)$. For $0 < \gamma^{-} < 1 < \gamma^{+} < \infty$, the following bounds hold:
\begin{equation}
\begin{split}
\mathbb{P}\left( G_{t} < t \gamma^{-}  \right) & \leq \left( \gamma^{-} e^{1 - \gamma^{-}} \right)^t \\
\mathbb{P}\left( G_{t} > t \gamma^{+} \right) & \leq \left( \gamma^{+} e^{1 - \gamma^{+}} \right)^t.
\end{split}
\end{equation}
\end{lemma}
These are standard Chernoff bounds, proven in the Appendix. Applying them to the above, taking $\gamma^{\pm} = L^{\pm}(\delta/2)$, note that $\gamma^{\pm} e^{1 - \gamma^{\pm}} = e^{-\delta/2}$. Hence,
\begin{equation}
\mathbb{P}\left( \mathbf{I}(\hat{f}_{t}, f) > \delta \right) \leq 2 e^{-\frac{\delta}{2}(t-1)}  + e^{-\frac{\delta}{2}t} = (2 e^{\frac{\delta}{2}}  + 1)e^{-\frac{\delta}{2}t} = e^{-O(t)}.
\end{equation}
This verifies \Assumption R2 - to a much faster rate than is in fact required. It remains to verify \Assumption R3. For $\delta > 0$,
\begin{equation}
\begin{split}
& \mathbb{P}( \delta < \mathbb{M}_{ \hat{f}_t }(  \rho ) ) \\
& = \mathbb{P}\left( \delta < \frac{1}{\hat{\alpha}_t} a^{-1}\left( \frac{\rho}{ b(\hat{\beta}_t) } \right) - \ln\left( \frac{1}{\hat{\alpha}_t} a^{-1}\left( \frac{\rho}{ b(\hat{\beta}_t) } \right) \right) -1 \text{ and }  \frac{\rho}{b(\hat{\beta}_t)}  > a(\hat{\alpha}_t) \right) \\
& = \mathbb{P}\left( \frac{\rho}{ b(\hat{\beta}_t) }  > a( \hat{\alpha}_t L^{-}(\delta) ) \text{ and }  \frac{\rho}{b(\hat{\beta}_t)}  > a(\hat{\alpha}_t) \right) \\
&  + \mathbb{P}\left(\frac{\rho}{ b(\hat{\beta}_t) } < a( \hat{\alpha}_t L^{+}(\delta) )  \text{ and }  \frac{\rho}{b(\hat{\beta}_t)}  > a(\hat{\alpha}_t) \right).
\end{split}
\end{equation}
The above bound can be simplified a great deal. In the second term, the conditions in fact contradict, since $a$ is taken to be a decreasing function of $\alpha$, and $L^{+}(\delta) > 1$ for $\delta > 0$, hence the probability is $0$. In the first term, since $0 < L^{-}(\delta) < 1$ for $\delta > 0$, and $a$ is decreasing, the conditions may be combined to yield
\begin{equation}
\mathbb{P}( \delta < \mathbb{M}_{ \hat{f}_t }(  \rho ) ) = \mathbb{P}\left( \frac{\rho}{ b(\hat{\beta}_t) }  > a( \hat{\alpha}_t L^{-}(\delta) ) \right).
\end{equation}

Let $\rho = s(f) - \epsilon = a(\alpha)b(\beta) - \epsilon$. It is convenient to take $\epsilon = a(\alpha)b(\beta)\tilde{\epsilon}$ with $0 < \tilde{\epsilon} < 1$, so $\rho = a(\alpha)b(\beta)(1-\tilde{\epsilon})$. Recall that $b$ is non-decreasing, and $\beta \leq \hat{\beta}_t$. Hence,
\begin{equation}
\begin{split}
\mathbb{P}( \delta < \mathbb{M}_{ \hat{f}_t }(  s(f) - \epsilon  ) ) & = \mathbb{P}\left( \frac{a(\alpha)b(\beta)(1-\tilde{\epsilon})}{ b(\hat{\beta}_t) }  > a( \hat{\alpha}_t L^{-}(\delta) ) \right)\\
& \leq \mathbb{P}\left( a(\alpha)(1-\tilde{\epsilon}) > a( \hat{\alpha}_t L^{-}(\delta) ) \right) \\
& = \mathbb{P}\left( \frac{\alpha}{\hat{\alpha}_t}  <  \frac{ \alpha }{a^{-1}\left( a(\alpha)(1-\tilde{\epsilon}) \right) } L^{-}(\delta) \right)
\end{split}
\end{equation}
Let $\sigma = \alpha / a^{-1}( a(\alpha)(1 - \tilde{\epsilon} ))$, and note that by \assumption on $a$, $0 < \sigma < 1$. Letting $G_t \sim \text{Gamma}(t,1)$, we may apply Lemma \ref{lem:gamma-bound} for
\begin{equation}
\mathbb{P}( \delta < \mathbb{M}_{ \hat{f}_t }(  s(f) - \epsilon  ) )  \leq \mathbb{P}\left( G_{t-1}  < (t-1) \sigma L^{-}(\delta) \right) \leq \left( \sigma L^{-}(\delta) e^{1 - \sigma L^{-}(\delta)} \right)^{t-1}
\end{equation}
Noting that $L^{-}(\delta) - \ln L^{-}(\delta) - 1 = \delta$, we have $L^{-}(\delta) e = e^{L^{-}(\delta) - \delta}$, and
\begin{equation}
\mathbb{P}( \delta < \mathbb{M}_{ \hat{f}_t }(  s(f) - \epsilon  ) )  \leq \left( \sigma e^{L^{-}(\delta)(1-\sigma) - \delta} \right)^{t-1}\leq \left( \sigma e^{1-\sigma}\right)^{t-1}e^{- \delta(t-1)}. 
\end{equation}
The last step follows as $0 < L^{-}(\delta) < 1$ for $\delta > 0$. This verifies \Assumption R3, with $d_t = 1$, producing a bound of the correct order. This in turn verifies the policy as optimal, taking $\tilde{d}(t) = 2$, and Eq. \ref{eqn:pareto-limit} follows from Eq. \ref{eqn:a-and-i-pareto}, the definition of $\mathbb{M}_f(\rho)$ for this model.
\qed
\end{proof}

\section{Maximizing Coverage of (Bounded) Uniform Bandits}\label{sec:coverage}
In this section, we consider a bandit model that demonstrates the necessity of the general form of \Assumption R3. In particular, consider the set of distributions that are uniform over finite disjoint unions of closed sub-intervals of $[0,1]$, i.e.,
\begin{equation}
\mathcal{F} = \left\{ f_S = \mathbbm{1}\left\{ x \in S \right\}/\lvert S \rvert : S = \bigcup_{i = 1}^k \left[ a_i, b_i \right], 0 \leq a_i < b_i \leq 1, k < \infty \right\}.
\end{equation}
For $S$ as above, it is convenient to take $\lvert S \rvert = \sum_{i = 1}^k (b_i - a_i)$ as the measure of $S$. Note that over this class of distributions, we have the following, that
\begin{equation}
\mathbf{I}(f_S, f_T) = 
\begin{cases} \ln( \lvert T \rvert / \lvert S \rvert ) &\mbox{if } S \subset T \\ 
\infty & \mbox{else}\end{cases}.
\end{equation}

We take as the score functional $s(f_S) = \lvert S \rvert$, the area covered by a given distribution in $\mathcal{F}$. In order to satisfy \Assumption B1, however, it is necessary to remove the complete interval $\left[ 0, 1 \right]$ from consideration, so we take $\mathcal{F}^{\prime} = \mathcal{F} \setminus \left\{ \left[ 0, 1 \right] \right\}$.

Under this model, we therefore have (noting that we are only interesting in $\rho \leq 1$),
\begin{equation}
\mathbb{M}_{f_S}(\rho) = 
\begin{cases} \ln( \rho / \lvert S \rvert ) &\mbox{if } \rho >  \lvert S \rvert \\ 
0 & \mbox{else}\end{cases}.
\end{equation}

Given $t$ samples from $f_S$, take for the moment $\hat{S}_t$ to be an estimate of $S$, that may or may not cover all of $S$. The fact that it is impossible to know if a non-trivial estimate for $S$ contains or is contained by $S$ makes using $\mathbf{I}$ as a measure of similarity difficult, as an estimate may be quite close to the truth, and yet have infinite difference under $\mathbf{I}$ - and this may not be uncommon. This prompts an alternative similarity measure, $\mathbf{\nu}(f_S, f_T) = \lvert \lvert S \rvert - \lvert T \rvert \rvert$. Note, this $\mathbf{\nu}$ is in fact a pseudo-metric on $\mathcal{F}^\prime$, but it will prove sufficient for our purposes. For any system of estimators of $S$, and some $\tilde{d}(t)$, we have as our index from Eq. \eqref{eqn:index},
\begin{equation}\label{eqn:index-eqn}
u_i(n, t) = \max \left(  \lvert \hat{S}_t \rvert n^{ \frac{1}{t - \tilde{d}(t)} }, 1 \right).
\end{equation}
At this point, the B-\Assumptions and \Assumption R1 are easily verified. \Assumptions R2 and R3 depend on the specifics of the estimators. We take the following scheme for estimating the support: Let $d_k$ be a positive, integer valued, non-decreasing function that is unbounded and sub-linear $k$. Given $t$ samples from $f_S$, consider a partition of $\left[ 0, 1 \right]$ into a sequence of intervals of width $\epsilon_t = 1/d_t$. The estimator $\hat{S}_t$ is then taken to be the union of partition intervals that contain at least one sample of the $t$ samples.

\Assumption R2 now takes the following form:
\begin{equation}
\mathbb{P}\left( \lvert \lvert \hat{S}_t \rvert - \lvert S \rvert \rvert > \delta \right) = o(1/t).
\end{equation}
Observe the decomposition,
\begin{equation}
\mathbb{P}\left( \lvert \lvert \hat{S}_t \rvert - \lvert S \rvert \rvert > \delta \right) = \mathbb{P}\left( \lvert \hat{S}_t \rvert  > \lvert S \rvert + \delta \right) + \mathbb{P}\left(  \lvert \hat{S}_t \rvert <  \lvert S \rvert - \delta \right).
\end{equation}
We have the following bound, almost surely, on the size of $\hat{S}_t$: Letting $\#S$ denote the number of disjoint intervals in $S$, $\lvert \hat{S}_t \rvert \leq \lvert S \rvert + 2 \epsilon_t \#S$. As this is almost sure, and $\epsilon_t \to 0$ with $t$, the first term in the decomposition above is $0$ for all sufficiently large $t$. To bound the other term, note that without loss of generality, we may take $\delta < \lvert S \rvert$. For notational convenience, let $\alpha = 1 - \delta/\lvert S \rvert$, and note that $0 < \alpha < 1$.

In the event that $\lvert \hat{S}_t \rvert <  \alpha \lvert S \rvert$, there exists a set of $\epsilon_t$-intervals of those that intersect $S$ that both cover a total measure of $\alpha \lvert S \rvert$, and contain all $t$ samples from $f_S$. The number of $\epsilon_t$-intervals intersecting $S$ is at most $\lceil \lvert S \rvert / \epsilon_t \rceil + 2\#S$. The number of $\epsilon_t$-intervals to cover an area of $\alpha \lvert S \rvert$ is $\lceil \alpha \lvert S \rvert / \epsilon_t \rceil$. Noting that the $f_S$ samples are independent, and fall in a given set of $\alpha \lvert S \rvert$-covering $\epsilon_t$-intervals with probability at most $\alpha$, we have
\begin{equation}
\begin{split}
\mathbb{P}\left( \lvert \hat{S}_t \rvert <  \alpha \lvert S \rvert \right) & \leq \binom{\lceil \lvert S \rvert d_t \rceil + 2\#S}{\lceil \alpha \lvert S \rvert  d_t \rceil} \alpha^t \\
& \leq \left( e \frac{\lceil \lvert S \rvert d_t \rceil + 2\#S}{\lceil \alpha \lvert S \rvert  d_t \rceil} \right)^{\lceil \alpha \lvert S \rvert  d_t \rceil} \alpha^t \\
& \leq \left( e \frac{ \lvert S \rvert d_t + 2\#S + 1}{\alpha \lvert S \rvert  d_t } \right)^{d_t} \alpha^t \\
& = \left(1 +  \frac{2\#S + 1}{\lvert S \rvert  d_t } \right)^{d_t} e^{d_t} \alpha^{t- d_t} = e^{O(d_t)} \alpha^{t - d_t}.
\end{split}
\end{equation}
It follows from this and the previous analysis that $\mathbb{P}\left( \lvert \lvert \hat{S}_t \rvert - \lvert S \rvert \rvert > \delta \right) = e^{-\Omega(t)}$ in fact, verifying \Assumption R2.
To verify \Assumption R3, note
\begin{equation}
\mathbb{P}( \delta < \mathbb{M}_{ \hat{f}_t }(  s(f) - \epsilon ) ) \leq \mathbb{P}\left( \delta < \ln\left( \lvert S \rvert - \epsilon ) / \lvert \hat{S}_t \rvert \right) \right) = \mathbb{P}\left( \lvert \hat{S}_t \rvert < \left( \lvert S \rvert - \epsilon \right) e^{-\delta} \right).
\end{equation}
The additional case in $\mathbb{M}_f$ may be dispensed with observing that $\delta > 0$. Taking $\epsilon < \lvert S \rvert$, it is convenient to define $\tilde{\epsilon} = 1 - \epsilon / \lvert S \rvert$. In which case,
\begin{equation}
\mathbb{P}( \delta < \mathbb{M}_{ \hat{f}_t }(  s(f) - \epsilon ) ) \leq \mathbb{P}\left( \lvert \hat{S}_t \rvert < \lvert S \rvert \left( 1 - \tilde{\epsilon} \right) e^{-\delta} \right).
\end{equation}
Applying the previously established bound therefore yields,
\begin{equation}
\mathbb{P}( \delta < \mathbb{M}_{ \hat{f}_t }(  s(f) - \epsilon ) ) \leq e^{O(d_t)} (1 - \tilde{\epsilon})^{ t - d_t } e^{-\delta(t - d_t)},
\end{equation}
verifying \Assumption R3.

We may now present the following result: Let $d_t = o(t)$ be an positive, integer valued, non-decreasing, unbounded sequence, and define $\tilde{d}(t) = d_t + 1$. Let $n_0 = \min \{ n : n > \tilde{d}(n) \}$.

\textbf{Policy $\boldmath \pi^*_{\text{U},\lvert \rvert}$ (UCB-COVERAGE)}
\begin{itemize}
\item[i)] For $n = 1, 2, \ldots n_0 \times N$, sample each bandit $n_0$ times, and
\item[ii)] for $n \geq n_0 \times N$, sample from bandit $\\pi^*_{\text{U},\lvert \rvert}(n+1) = \argmax_i u_i\left( n, T^i_{\pi^*_{\text{U},\lvert \rvert}}(n) \right)$ breaking ties uniformly at random, where
\begin{equation}
u_i(n, t) =  \max \left(  \lvert \hat{S}_t \rvert n^{ \frac{1}{t - \tilde{d}(t)} }, 1 \right)
 \end{equation}
\end{itemize}

\begin{theorem}
Policy $\pi^*_{\text{U},\lvert \rvert}$ as defined above is asymptotically optimal.  In particular, for any choice of $\{ f_i = f_{S_i} \} \subset \mathcal{F}^\prime$, with $s^* = \max_i s(f_{S_i}) = \max_i \lvert S_i \rvert$, for each sub-optimal bandit $i$ the following holds:
\begin{equation}\label{eqn:pareto-limit}
\lim_n \frac{ \mathbb{E}\left[ T^i_{\pi^*_{\text{U},\lvert \rvert}}(n) \right] }{ \ln n } = \frac{ 1 }{  \ln s^* - \ln \lvert S_i \rvert }.
\end{equation}
\end{theorem}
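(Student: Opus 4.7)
The plan is to derive the result as a direct corollary of Theorem \ref{thm:optimality}. By construction, $\pi^*_{\text{U},\lvert \rvert}$ is exactly the instantiation of the generic policy $\pi^*$ (UCB-$(\mathcal{F}^\prime, s, \tilde{d})$) obtained by plugging in the partition-based covering estimator $\hat{S}_t$, the score $s(f_S) = \lvert S \rvert$, and the choice $\tilde{d}(t) = d_t + 1$. It therefore suffices to check that this model satisfies \Assumptions B1, B2, R1--R3, and that the gap hypothesis $\tilde{d}(t) - d_t \geq \Delta > 0$ of Theorem \ref{thm:optimality} holds.

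First I would note that nearly all of these verifications have already been carried out in the paragraphs preceding the theorem statement: B1 and B2 follow easily from the excision of $[0,1]$ from $\mathcal{F}$ together with the explicit form of $\mathbf{I}(f_S,f_T)$ and of $s$; R1 is immediate from the closed-form $\mathbb{M}_{f_S}(\rho) = \ln(\rho/\lvert S \rvert)$ for $\rho > \lvert S \rvert$, which is continuous in $\rho$ and in $f_S$ under $\mathbf{\nu}(f_S,f_T) = \lvert \lvert S \rvert - \lvert T \rvert \rvert$; R2 was established above with the stronger rate $e^{-\Omega(t)}$ via a binomial count of occupied partition cells and a Chernoff-type estimate; and R3 was also derived above in the form required. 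I would also make one small bookkeeping remark: the index in \eqref{eqn:index-eqn} is indeed the evaluation of the generic index \eqref{eqn:index} for this model, since the constraint $\mathbf{I}(f_{\hat{S}_t}, f_T) < \ln n/(t-\tilde{d}(t))$ forces $\hat{S}_t \subset T$ and $\lvert T \rvert < \lvert \hat{S}_t \rvert n^{1/(t-\tilde{d}(t))}$, with the cap at $1$ coming from $T \subset [0,1]$ in $\mathcal{F}^\prime$.

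For the gap condition, $\tilde{d}(t) - d_t = 1$ so $\Delta = 1$ is admissible. Theorem \ref{thm:optimality} then yields, for every suboptimal $i$,
\begin{equation*}
\lim_n \frac{\mathbb{E}\left[ T^i_{\pi^*_{\text{U},\lvert \rvert}}(n) \right]}{\ln n} = \frac{1}{\mathbb{M}_{f_{S_i}}(s^*)} = \frac{1}{\ln s^* - \ln \lvert S_i \rvert},
\end{equation*}
using that $s^* > \lvert S_i \rvert$ for suboptimal $i$ and plugging in the explicit $\mathbb{M}$.

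Since essentially all technical work has already been done, there is no genuinely new obstacle. The main conceptual hurdle \emph{was} verifying R3 for this model, because $\mathbf{I}(f_{\hat{S}_t}, f_S)$ is infinite whenever $\hat{S}_t \not\subset S$ --- a very common event for natural estimators --- which is precisely why the similarity measure $\mathbf{\nu}$ (rather than $\mathbf{I}$) is used and why the generalized form of R3 in the paper is necessary; this is the only subtlety the proof needs to flag before invoking Theorem \ref{thm:optimality}.
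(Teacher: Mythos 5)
Your proposal is correct and follows the paper's own argument: the paper's proof of this theorem is literally the one-line statement that the B- and R-conditions were verified in the preceding paragraphs, after which Theorem~\ref{thm:optimality} is invoked with the explicit $\mathbb{M}_{f_S}(\rho) = \ln(\rho/\lvert S\rvert)$ for $\rho > \lvert S\rvert$. Your elaboration (including the check that $\tilde{d}(t)-d_t = 1 = \Delta$, the observation that \eqref{eqn:index-eqn} is the specialization of the generic index \eqref{eqn:index}, and the remark that the use of $\mathbf{\nu}$ in place of $\mathbf{I}$ is forced because $\mathbf{I}(f_{\hat{S}_t}, f_S)=\infty$ whenever $\hat{S}_t \not\subset S$) fills in exactly the details the paper leaves implicit and does not deviate from its route.
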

\begin{proof}
The proof is given above, through the verification of The B- and R-\Assumptions.
\end{proof}

\section{The Uniform Model under General Score Functionals}\label{sec:uni}
In this section, the uniform distributions are taken to be over single intervals, with finite but otherwise unconstrained bounds. This additional restriction on the support is necessary to ensure that the score functionals considered will be continuous with respect to $\mathbf{I}$. We take $\mathcal{F}$ as the family of Uniform distributions with interval support:
\begin{equation}
\mathcal{F} = \left\{ f_{a,b}(x) = \mathbbm{1}\{ x \in [a,b] \}/(b-a) : -\infty < a < b < \infty \right\}.
\end{equation}
Taking $X$ as distributed according to $f_{a, b} \in \mathcal{F}$, e.g., $X \sim \text{Unif}[a,b]$, $X$ is distributed over $[a, b]$, with $\bE[X] = (a + b)/2$. As this is a well defined function over all of $\mathcal{F}$, it makes for a reasonable (and traditional) score functional. However, we are aiming for greater generality. Taking the controller's goal to be achieving large rewards from the activated bandits, any score functional $s(f_{a,b}) = s(a,b)$ of interest should be an increasing function of $a$, and an increasing function of $b$. We additionally take $s$ to be continuous in $a$ and $b$. Note, this is satisfied taking $s$ as the expected value, $s_\mu(a,b) = (a + b)/2$.

For $f = f_{a,b} \in \mathcal{F}$, and $t$ many i.i.d. samples under $f$, we take the estimator $\hat{f}_t = f_{\hat{a}_t, \hat{b}_t} \in \mathcal{F}$, where
\begin{equation}
\begin{split}
\hat{a}_t & = \min_{n = 1, \ldots, t} X_n, \\
\hat{b}_t & = \max_{n = 1, \ldots, t} X_n.
\end{split}
\end{equation}

Given the above, we may define the following policy as a specific instance of policy $\pi^*$ under this model:

\textbf{Policy $\boldmath \pi^*_{\text{U,s}}$ (UCB-UNIFORM)}
\begin{itemize}
\item[i)] For $n = 1, 2, \ldots 3N$, sample each bandit $3$ times, and
\item[ii)] for $n \geq 3N$, sample from bandit $\pi^*_{\text{U,s}}(n+1) = \argmax_i u_i\left( n, T^i_{\pi^*_{\text{U,s}}}(n) \right)$ breaking ties uniformly at random, where
\begin{equation}
u_i(n, t) = s( \hat{a}^i_t, \hat{a}^i_t + n^{\frac{1}{t-2}}( \hat{b}^i_t - \hat{a}^i_t )).
\end{equation}
\end{itemize}

\begin{theorem}
For general $s$ as outlined above, policy $\pi^*_{\text{U,s}}$ as defined above is asymptotically optimal.  In particular, for any choice of $\{ f_i = f_{a_i, b_i} \} \subset \mathcal{F}$, with $s^* = \max_i s(a_i, b_i)$, for each sub-optimal bandit $i$ the following holds:
\begin{equation}\label{eqn:uniform-limit}
\lim_n \frac{ \mathbb{E}\left[ T^i_{\pi^*_{\text{U,s}}}(n) \right] }{ \ln n } = \frac{ 1 }{ \min_{ b_i \leq b } \left\{ \ln\left( b - a_i  \right) : s(a_i, b) \geq s^* \right\} - \ln( b_i - a_i) }.
\end{equation}
Taking the particular choice of $s_\mu(a, b) = (a + b)/2$, this yields for all sub-optimal $i$,
\begin{equation}\label{eqn:uniform-limit-mu}
\lim_n \frac{ \mathbb{E}\left[ T^i_{\pi^*_{\text{U},s_\mu}}(n) \right] }{ \ln n } = \frac{ 1 }{ \ln\left( \frac{2s^* - 2a_i}{b_i - a_i}  \right) }.
\end{equation}
\end{theorem}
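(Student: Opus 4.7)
The plan is to apply Theorem \ref{thm:optimality} to this uniform-interval model, so it suffices to verify \Assumptions B1, B2 and R1--R3 for an appropriate similarity measure $\mathbf{\nu}$, and to check that the deficit $\tilde{d}(t) - d_t$ is bounded below by some $\Delta > 0$. A direct computation gives
$$
\mathbf{I}(f_{a,b}, f_{c,d}) = \begin{cases} \ln\!\left((d-c)/(b-a)\right) & \text{if } c \leq a \leq b \leq d, \\ \infty & \text{otherwise.} \end{cases}
$$
Since $s$ is increasing in both arguments, the infimum defining $\mathbb{M}_{f_{a,b}}(\rho)$ is attained at $c = a$ and at the smallest $d \geq b$ with $s(a,d) \geq \rho$, giving $\mathbb{M}_{f_{a,b}}(\rho) = \min\{\ln(d-a) : d \geq b, \ s(a,d) \geq \rho\} - \ln(b-a)$. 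From this closed form, \Assumption B1 is immediate (take $c = a$ and enlarge $d$), and \Assumption B2 follows from the continuity of $s$. Substituting $f_i = f_{a_i,b_i}$ and $\rho = s^*$ yields the reciprocal in \eqref{eqn:uniform-limit}, and in the mean case $s_\mu(a_i, d) = s^* \Leftrightarrow d = 2s^* - a_i \geq b_i$ (strictly, for sub-optimal $i$), which specializes to \eqref{eqn:uniform-limit-mu}.

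For the R-\assumptions I will take $\mathbf{\nu}(f_{a,b}, f_{c,d}) = \max(|a-c|, |b-d|)$. \Assumption R1 then follows from the closed-form expression for $\mathbb{M}$ together with the continuity of $s$. For \Assumption R2, the explicit distributions $\hat{a}_t \geq a$ with $\mathbb{P}(\hat{a}_t - a > \delta) = (1 - \delta/(b-a))_+^t$, and the symmetric bound on $b - \hat{b}_t$, give $\mathbb{P}(\mathbf{\nu}(\hat{f}_t, f) > \delta) \leq 2(1 - \delta/(b-a))^t = e^{-\Omega(t)}$, which is much faster than the required $o(1/t)$.

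The core of the argument is \Assumption R3. Unpacking the definition \eqref{eqn:index} in the uniform model, the supremum is attained at $c = \hat{a}_t$ and $d = \hat{a}_t + e^{\delta}(\hat{b}_t - \hat{a}_t)$, so
$$
\{\mathbb{M}_{\hat{f}_t}(s(f) - \epsilon) > \delta\} = \left\{ s\!\left(\hat{a}_t, \ \hat{a}_t + e^{\delta}(\hat{b}_t - \hat{a}_t)\right) \leq s(a,b) - \epsilon \right\}.
$$
By continuity of $s$, I pick $\eta = \eta(\epsilon, f) > 0$ with $s(a, b - \eta) > s(a,b) - \epsilon$. Monotonicity of $s$ in both arguments and $\hat{a}_t \geq a$ then imply that whenever $\hat{a}_t + e^{\delta}(\hat{b}_t - \hat{a}_t) \geq b - \eta$, the event fails. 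So the event forces $\hat{b}_t - \hat{a}_t < (b - a - \eta)e^{-\delta} = (b-a)(1 - \eta/(b-a))e^{-\delta}$. Using the range density of uniform order statistics, $\mathbb{P}(\hat{b}_t - \hat{a}_t < (b-a)r) = tr^{t-1} - (t-1)r^t \leq tr^{t-1}$ for $r \in [0,1]$, I obtain
$$
\mathbb{P}(\mathbb{M}_{\hat{f}_t}(s(f) - \epsilon) > \delta) \leq t \!\left(1 - \eta/(b-a)\right)^{t-1} e^{-\delta(t-1)} = e^{-\Omega(t)}\, e^{-\delta(t-1)},
$$
where the $\Omega(t)$-term, hiding the dependence on $\epsilon$ and $f$, is geometric because $1 - \eta/(b-a) < 1$. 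This verifies \Assumption R3 with $d_t = 1$; the policy's $\tilde{d}(t) = 2$ gives the required $\Delta = 1 > 0$, and Theorem \ref{thm:optimality} yields asymptotic optimality, with the explicit limits \eqref{eqn:uniform-limit}, \eqref{eqn:uniform-limit-mu} coming from the $\mathbb{M}_{f_i}(s^*)$ calculation in the first paragraph.

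The main obstacle is the $e^{-\Omega(t)}$ prefactor in \Assumption R3: a naive bound merely using $\hat{b}_t - \hat{a}_t < (b-a)e^{-\delta}$ yields $t e^{-\delta(t-1)}$, which is insufficient because the polynomial $t$ is not $e^{-\Omega(t)}$. It is essential to exploit the strict continuity of $s$ to extract a strict contraction $1 - \eta/(b-a) < 1$ in the range-probability bound, which is what produces the geometric prefactor independently of $\delta$ and allows Theorem \ref{thm:optimality} to be invoked.
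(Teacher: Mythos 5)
Your proposal is correct and follows essentially the same route as the paper: the same closed forms for $\mathbf{I}$ and $\mathbb{M}_{f_{a,b}}(\rho)$, verification of the B- and R-conditions, and the key step of extracting a strict contraction via continuity of $s$ (your $\eta$ is the paper's $\tilde{\epsilon}$) before applying the uniform range-of-order-statistics bound (the paper's Lemma \ref{lem:uniform}), giving R3 with $d_t = 1$, $\tilde{d}(t) = 2$, and the conclusion via Theorem \ref{thm:optimality}. The only cosmetic deviations are your choice of $\mathbf{\nu}$ as the parameter sup-distance rather than the paper's $\mathbf{\nu} = \mathbf{I}$ (which makes R2 even easier), and using $\hat{a}_t \geq a$ directly in R3 instead of the paper's shift inequality $s(\hat{a}_t, \hat{a}_t + x) \geq s(a, a + x)$; both are equally valid.
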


\begin{proof}
To begin, it can be shown that
\begin{equation}\label{eqn:a-and-i-uniform}
\begin{split}
\mathbf{I}( f_{a, b}, f_{\tilde{a}, \tilde{b}} ) & =
\begin{cases}
\ln\left( \frac{\tilde{b} - \tilde{a}}{b-a}  \right) &\mbox{if } \tilde{a} \leq a, b \leq \tilde{b} \\ 
\infty & \mbox{else.}
\end{cases}\\
\mathbb{M}_{f_{a, b}}(\rho) & =\min_{ b \leq \tilde{b} } \left\{ \ln\left( \tilde{b} - a  \right) : s(a, \tilde{b}) \geq \rho \right\} - \ln( b - a).
\end{split}
\end{equation}
At this point, \Assumption B1 is easy to verify given the structure of the score function and the parameterization of $\mathcal{F}$. Note then that if $\mathbf{I}( f_{a, b}, f_{\tilde{a}, \tilde{b}} ) < \delta$, it follows that
\begin{equation}
\begin{split}
\tilde{a} & \leq a \\
b & \leq \tilde{b} \\
\tilde{b} - \tilde{a} & < (b-a)e^{\delta}.
\end{split}
\end{equation}
It follows that $0 \leq \tilde{b} - b < (b-a)(e^{\delta}-1)$ and $0 \leq a - \tilde{a} < (b-a)(e^{\delta}-1)$. From this, we may conclude that any function of $f \in \mathcal{F}$ that is continuous with respect to the parameters is continuous with respect to $f$ under $\mathbf{I}$. This verifies \Assumption B2. Given the above considerations, for verifying \Assumptions R1 - R3, it is convenient to take $\mathbf{\nu} = \mathbf{I}$ as the similarity measure on $\mathcal{F}$. Note that the continuity of $s$ with respect to $b$ makes $\mathbb{M}_f(\rho)$ continuous with respect to $\rho$. This, and the above considerations, verifies \Assumption R1.

To verify \Assumption R2, note that $a \leq \hat{a}_t \leq \hat{b}_t \leq b$. Hence, we have the following:
\begin{equation}
\begin{split}
\mathbb{P}\left( \mathbf{I}( \hat{f}_t, f ) > \delta \right) & = \mathbb{P}\left( (b-a) > (\hat{b}_t-\hat{a}_t)e^{\delta} \right) = \mathbb{P}\left( e^{-\delta} > \frac{\hat{b}_t-\hat{a}_t}{b-a} \right)
\end{split}
\end{equation}
Here, we utilize the following Lemma, characterizing the distribution of $\hat{a}_t, \hat{b}_t$:
\begin{lemma}\label{lem:uniform}
For $t \geq 2, 0 < \lambda < 1$:
\begin{equation}
\mathbb{P}\left(  \frac{\hat{b}_t-\hat{a}_t}{b-a} < \lambda \right) = \left( t(1-\lambda) + \lambda \right) \lambda^{t-1} \leq (t + 1)\lambda^{t-1}.
\end{equation}
\end{lemma}
The proof is given in the Appendix.
Hence we see that
\begin{equation}
\mathbb{P}\left( \mathbf{I}( \hat{f}_t, f ) > \delta \right) \leq (t + 1)e^{-\delta(t-1)} = e^{-O(t)},
\end{equation}
verifying \Assumption R2.

For \Assumption R3, note that
\begin{equation}
\begin{split}
\mathbb{P}( \delta < \mathbb{M}_{ \hat{f}_t }(  \rho ) ) & = \mathbb{P}\left( \delta < \min_{ \hat{b}_t \leq \tilde{b} } \left\{ \ln\left( \frac{ \tilde{b} - \hat{a}_t }{ \hat{b}_t - \hat{a}_t }  \right) : s(\hat{a}_t, \tilde{b}) > \rho \right\} \right) \\
& = \mathbb{P}\left( \max_{ \hat{b}_t \leq \tilde{b} } \left\{ s( \hat{a}_t, \tilde{b} ) :  \ln\left( \frac{ \tilde{b} - \hat{a}_t }{ \hat{b}_t - \hat{a}_t }  \right) \leq \delta \right\} < \rho \right) \\
& = \mathbb{P}\left( \max_{ \hat{b}_t \leq \tilde{b} } \left\{ s( \hat{a}_t, \tilde{b} ) :  \tilde{b} \leq \hat{a}_t + e^{\delta}(\hat{b}_t - \hat{a}_t) \right\} < \rho \right) \\
& = \mathbb{P}\left(  s( \hat{a}_t, \hat{a}_t + e^{\delta}(\hat{b}_t - \hat{a}_t)) < \rho \right) \\
& \leq \mathbb{P}\left(  s( a, a + e^{\delta}(\hat{b}_t - \hat{a}_t)) < \rho \right).
\end{split}
\end{equation}
Hence we have that
\begin{equation}
\mathbb{P}( \delta < \mathbb{M}_{ \hat{f}_t }(  s(f) - \epsilon ) ) \leq \mathbb{P}\left(  s( a, a + e^{\delta}(\hat{b}_t - \hat{a}_t)) < s(a,b) - \epsilon \right).
\end{equation}
Given the continuity of $s$, let $\tilde{\epsilon} > 0$ be such that $s(a, b- \tilde{\epsilon}) \geq s(a,b) - \epsilon$.
\begin{equation}
\begin{split}
\mathbb{P}( \delta < \mathbb{M}_{ \hat{f}_t }(  s(f) - \epsilon ) ) &  \leq \mathbb{P}\left(  s( a, a + e^{\delta}(\hat{b}_t - \hat{a}_t)) < s(a,b - \tilde{\epsilon}) \right) \\
& = \mathbb{P}\left(  a + e^{\delta}(\hat{b}_t - \hat{a}_t) < b - \tilde{\epsilon} \right) \\
& = \mathbb{P}\left( \frac{\hat{b}_t - \hat{a}_t}{b-a} < e^{-\delta}\left(1 - \frac{\tilde{\epsilon}}{b-a}\right) \right) \\
& \leq (t+1)e^{-\delta(t-1)}\left( 1 - \frac{\tilde{\epsilon}}{b-a}\right)^{t-1} = e^{-\Omega(t)}e^{-\delta(t-1)}.
\end{split}
\end{equation}
This verifies \Assumption R3, with $d_t = 1$, producing a bound of the correct order. This in turn verifies the policy as optimal, taking $\tilde{d}(t) = 2$, and Eq. \ref{eqn:uniform-limit} follows from the definition of $\mathbb{M}_f(\rho)$ for this model.\qed
\end{proof}

\section{Three Examples of Normal Bandits}\label{sec:normal}
In this section, we consider the case of the bandits being chosen from a set or sets of normal densities, with $f_{\mu, \sigma}(x) = e^{-(x-\mu)^2/(2\sigma^2)}/(\sigma \sqrt{ 2 \pi})$. In the three examples discussed, the family or families of potential distributions will be restricted in certain ways, but the following general discussion relative to normal distributions is useful. In particular, for a general normal density $f = f_{\mu, \sigma}$, given $t$ many i.i.d. samples from $f$, we take $\hat{f}_t = f_{\hat{\mu}_t, \hat{\sigma}_t}$ where
\begin{equation}
\begin{split}
\hat{\mu}_t & = \frac{1}{t} \sum_{n = 1}^t X_n,\\
\hat{\sigma}^2_t & = \frac{1}{t-1} \sum_{n = 1}^t \left( X_n - \hat{\mu}_t \right)^2.
\end{split}
\end{equation}
Recall the classic result, that $(\hat{\mu}_t - \mu) \sqrt{t}/\sigma$ and $\hat{\sigma}^2_t (t-1)/\sigma^2$ are independent, with a standard normal and a $\chi^2_{t-1}$ distribution, respectively. The following lemma, proven in the Appendix, will prove useful:
\begin{lemma}\label{lem:normal-bound}
Let $U_t \sim \chi^2_t$, and $Z$ be a standard normal. For $z \geq 0$, and $0 < u^{-} < 1 < u^{+} < \infty$, the following bounds hold:
\begin{equation}
\begin{split}
\mathbb{P}\left( U_t > u^{+} t\right) & \leq \left( u^{+} e^{1 - u^{+}} \right)^{\frac{t}{2}}\\
\mathbb{P}\left( U_t < u^{-} t\right) & \leq  \left( u^{-} e^{1-u^{-}} \right)^{\frac{t}{2}}\\
\mathbb{P}\left( Z > z \right) & \leq \frac{1}{2}e^{-z^2/2}.
\end{split}
\end{equation}
\end{lemma}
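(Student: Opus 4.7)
All three inequalities are classical exponential tail bounds, and my plan is to derive them uniformly via the Chernoff / moment-generating-function method, with one small sharpening step for the Gaussian piece. I would handle the two $\chi^2$ bounds together, then address the factor $1/2$ in the normal bound.

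For the chi-squared inequalities, the key ingredient is $\mathbb{E}[e^{sU_t}] = (1 - 2s)^{-t/2}$, valid for $s < 1/2$. Applying Markov's inequality to $e^{sU_t}$ for $s \in (0, 1/2)$ yields
\[
\mathbb{P}(U_t > u^{+} t) \leq (1 - 2s)^{-t/2} e^{-s u^{+} t},
\]
and optimizing over $s$ at $s^{*} = (1 - 1/u^{+})/2$ (feasible because $u^{+} > 1$) produces exactly $(u^{+} e^{1 - u^{+}})^{t/2}$ after simplification. The lower tail follows by the parallel argument applied to $e^{-s U_t}$ with $s > 0$, using $\mathbb{E}[e^{-sU_t}] = (1 + 2s)^{-t/2}$ and optimizer $s^{*} = (1/u^{-} - 1)/2 > 0$. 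Both steps are standard and routine.

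For the Gaussian tail, the bare Chernoff approach using $\mathbb{E}[e^{sZ}] = e^{s^2/2}$ yields only $\mathbb{P}(Z > z) \leq e^{-z^2/2}$, which is off by exactly the factor of $2$ claimed. To recover the $1/2$ prefactor I would instead argue analytically: set $g(z) = \mathbb{P}(Z > z) - \tfrac{1}{2} e^{-z^2/2}$, note that $g(0) = 0$ and $g(z) \to 0$ as $z \to \infty$, and compute
\[
g'(z) = -\frac{1}{\sqrt{2\pi}} e^{-z^2/2} + \frac{z}{2} e^{-z^2/2} = e^{-z^2/2}\left( \frac{z}{2} - \frac{1}{\sqrt{2\pi}} \right).
\]
Hence $g$ decreases on $[0, 2/\sqrt{2\pi}]$ to a negative minimum and then increases monotonically back to its limit $0$ on $[2/\sqrt{2\pi}, \infty)$, so $g \leq 0$ on all of $[0, \infty)$, which is the desired bound.

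The only mild obstacle is precisely this last sharpening for the Gaussian piece, since the raw Chernoff step is a factor of $2$ too weak; the monotonicity argument above handles it. Everything else reduces to direct MGF optimization with an explicit closed-form minimizer, so the detailed arithmetic is well-suited to the Appendix as the authors indicate.
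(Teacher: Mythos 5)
Your proposal is correct and takes essentially the same approach as the paper: the $\chi^2$ bounds use the identical Chernoff/MGF argument with the same optimal exponent (the paper simply writes down $s = \tfrac12 - \tfrac{1}{2u^{+}}$ rather than optimizing explicitly), and the Gaussian sharpening is the same monotonicity argument, phrased via the difference $g(z) = \mathbb{P}(Z>z) - \tfrac12 e^{-z^2/2}$ rather than the ratio $2e^{z^2/2}(1-\Phi(z))$ used in the paper. If anything your version of the Gaussian step is slightly more self-contained, since the sign of $g'(z)$ is immediate, whereas proving the paper's ratio is monotone decreasing implicitly requires the Mills-ratio bound $1-\Phi(z)\le\phi(z)/z$.
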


Since the domain of any such distribution is the whole of $\mathbb{R}$, it is not difficult to show that for any normal densities $f, g$:
\begin{equation}
\mathbf{I}(f, g) = \frac{ (\mu_f - \mu_g)^2 }{2 \sigma_g^2 } + \frac{1}{2} \left( \frac{ \sigma_f^2 }{ \sigma_g^2 } - \ln \left( \frac{ \sigma_f^2 }{ \sigma_g^2 } \right) - 1 \right).
\end{equation}

Again, let $L^-(\delta)$ and $L^+(\delta)$ be the smallest and largest positive solutions to $L - \ln L - 1 = \delta$, respectively. Note that if $\mathbf{I}(f,g) < \delta$, it follows that
\begin{equation}
\begin{split}
\frac{ (\mu_f - \mu_g)^2 }{2 \sigma_g^2 } & < \delta, \\
\frac{1}{2} \left( \frac{ \sigma_f^2 }{ \sigma_g^2 } - \ln \left( \frac{ \sigma_f^2 }{ \sigma_g^2 } \right) - 1 \right) & < \delta.
\end{split}
\end{equation}
From the above, we have that $\sigma_f^2 / L^+(2\delta) < \sigma_g^2 < \sigma_f^2/L^-(2\delta)$ and
\begin{equation}
\lvert \mu_f - \mu_g \rvert < \sigma_g \sqrt{2\delta} < \sigma_f \sqrt{2 \delta / L^-(2\delta) }.
\end{equation}
Since $L^{\pm}(2\delta) \to 1$ and $\delta / L^-(2\delta) \to 0$ as $\delta \to 0$, the above implies that any functional of normal densities that is a continuous function of the parameters of $f$ over the family of densities will be continuous with respect to $f$ under $\mathbf{I}$.

\subsection{Unknown Means and Unknown Variances: Maximizing Expected Value}
In this section, we take $\mathcal{F}$ as the family of unrestricted normal distributions:
\begin{equation}
\mathcal{F} = \left\{ f_{\mu,\sigma}(x) = \frac{1}{\sigma \sqrt{2\pi} } e^{ -\frac{1}{2\sigma^2}( x - \mu )^2 } : -\infty < \mu < \infty, \sigma > 0 \right\}.
\end{equation}
As such, this section essentially reproduces the   result of \cc{chk2015} (presented therein in terms of classical regret) in the framework established herein.
In this  case the controller is interested in activating the bandit with maximum expected value   as often as possible. This can be achieved if we
 take the score functional of interest here  to be  the expected value, $$s(f) = \int_\mathbb{R} x f(x) dx =\mu .$$  
 
  We define the specific instance of policy $\pi^*$ under this model:

\textbf{Policy $\boldmath \pi_{\text{CHK}}$ (UCB-NORMAL)}
\begin{itemize}
\item[i)] For $n = 1, 2, \ldots 3N$, sample each bandit $3$ times, and
\item[ii)] for $n \geq 3N$, sample from bandit $\pi_{\text{CHK}}(n+1) = \argmax_i u_i\left( n, T^i_{\pi_\text{CHK}}(n) \right)$ breaking ties uniformly at random, where
\begin{equation}
u_i(n, t) = \hat{\mu}^i_t + \hat{\sigma}^i_t \sqrt{ n^{\frac{2}{t-2}} - 1 }.
\end{equation}
\end{itemize}

\begin{theorem}
For $s(f_{\mu, \sigma}) = \mu$ in the above model, policy $\pi_{\text{CHK}}$ as defined above is asymptotically optimal. In particular, for any choice of $\{ f_i = f_{\mu_i, \sigma_i} \} \subset \mathcal{F}$, with $\mu^* = \max_i \mu_i$, for each sub-optimal bandit $i$ the following holds:
\begin{equation}\label{eqn:normal-limit}
\lim_n \frac{ \mathbb{E}\left[ T^i_{\pi_{\text{CHK}}}(n) \right] }{ \ln n } = \frac{ 2 }{ \ln \left( 1 + \frac{ (\mu^* - \mu_i)^2 }{ \sigma_i^2 } \right) }.
\end{equation}
\end{theorem}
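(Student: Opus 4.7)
The plan is to invoke Theorem \ref{thm:optimality} by verifying \Assumptions B1, B2 and R1--R3 for the triple $(\mathcal{F}, s, \hat{f}_t)$, with $\mathbf{\nu} = \mathbf{I}$ as the similarity measure on $\mathcal{F}$. The key preliminary step is to obtain a closed form for $\mathbb{M}_{f_{\mu,\sigma}}(\rho)$ at $\rho > \mu$. Since the constraint $s(\tilde{f}) > \rho$ involves only $\tilde{\mu}$, I would take $\tilde{\mu} \downarrow \rho$ and minimize the general normal KL expression over $\tilde{\sigma}^2$. Writing $r = \sigma^2/\tilde{\sigma}^2$ and setting the derivative to zero gives $r_* = 1/(1+(\rho-\mu)^2/\sigma^2)$; substituting back collapses the two KL summands into
\begin{equation}
\mathbb{M}_{f_{\mu,\sigma}}(\rho) \;=\; \tfrac{1}{2}\ln\!\left( 1 + (\rho-\mu)^2/\sigma^2\right),
\end{equation}
whose reciprocal at $\rho = \mu^*$ is exactly the right-hand side of \eqref{eqn:normal-limit}.

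\Assumptions B1 and B2 then follow from the general discussion preceding this subsection: any target mean is attained at finite KL cost by a shift of $f$, and the parameter bounds in terms of $L^{\pm}(2\delta)$ already derived show that $s=\mu$ is continuous under $\mathbf{I}$. Condition R1 is immediate from the closed form of $\mathbb{M}$. For R2, I would decompose the event $\{\mathbf{I}(\hat{f}_t, f) > \delta\}$ along the mean and variance summands of the KL expression, handle the mean part via the standard Gaussian tail of $(\hat{\mu}_t - \mu)\sqrt{t}/\sigma$, and handle the variance part by applying the chi-squared bounds of Lemma \ref{lem:normal-bound} with $u^{\pm} = L^{\pm}(\delta)$, using the identity $u^{\pm}e^{1-u^{\pm}} = e^{-\delta}$. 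This in fact yields $e^{-\Omega(t)}$, much stronger than the required $o(1/t)$.

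Condition R3 is where the bulk of the effort lies. Substituting the closed form for $\mathbb{M}$ reduces the event $\{\delta < \mathbb{M}_{\hat{f}_t}(\mu - \epsilon)\}$ to the single inequality
\begin{equation}
\mu - \hat{\mu}_t \;>\; \epsilon + \hat{\sigma}_t \sqrt{e^{2\delta} - 1}.
\end{equation}
I would then split along a chi-squared threshold $\hat{\sigma}_t^2 < \sigma^2 L^-(2\delta')$ for a carefully chosen $\delta' \geq \delta$. On the small-variance branch the chi-squared lower tail of Lemma \ref{lem:normal-bound} yields $e^{-\delta'(t-1)}$. On the complementary branch $\hat{\sigma}_t$ is bounded below, so the inequality forces $\hat{\mu}_t$ deep into the lower Gaussian tail; Lemma \ref{lem:normal-bound} again applies, and expanding $(\epsilon + \hat{\sigma}_t\sqrt{e^{2\delta}-1})^2$ separates a clean $e^{-\delta t}$ factor from an $e^{-\Omega(t)}$ slack carrying the $\epsilon$-dependence through the $\epsilon^2/(2\sigma^2)$ term. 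The main obstacle is calibrating $\delta'$ so that both branches simultaneously deliver a bound of the form $e^{-\Omega(t)}e^{-(t-d_t)\delta}$ with a common $d_t = O(1)$; fortunately the independence of $\hat{\mu}_t$ and $\hat{\sigma}^2_t$ keeps the two tails genuinely decoupled in the union bound. Once R3 holds with $d_t = 1$, Theorem \ref{thm:optimality} applies with $\tilde{d}(t) = 2$, and Equation \eqref{eqn:normal-limit} is exactly the reciprocal of $\mathbb{M}_{f_i}(\mu^*)$ derived above.
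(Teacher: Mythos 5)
Your reduction of the problem to Theorem \ref{thm:optimality}, the closed form $\mathbb{M}_{f_{\mu,\sigma}}(\rho)=\tfrac12\ln\left(1+(\rho-\mu)^2/\sigma^2\right)$ (your optimization over $r=\sigma^2/\tilde\sigma^2$ is correct), and the verifications of B1, B2, R1 and R2 all match the paper. The gap is in R3, which you correctly identify as the crux but then sketch with a hard threshold split $\{\hat\sigma_t^2<\sigma^2 L^-(2\delta')\}$, $\delta'\ge\delta$, plus a union bound. This cannot produce a bound of the required form $e^{-\Omega(t)}e^{-(t-d_t)\delta}$ \emph{uniformly in $\delta$}, and the uniformity matters: in the proof of Theorem \ref{thm:optimality} the condition is invoked at $\delta=\ln t/(k-\tilde d(k))$, which is arbitrarily large when $k$ is small relative to $\ln t$. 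The two branches impose incompatible demands on $\delta'$. For the small-variance branch to carry both the $e^{-(t-d_t)\delta}$ factor and a $\delta$-independent $e^{-\Omega(t)}$ slack (the slack is essential, since without $e^{-\Omega(k)}$ the sum $\sum_k t^{-\Delta/(k-\tilde d(k))}$ in Theorem \ref{thm:optimality}'s proof grows linearly in $t$), you need $\delta'\ge\delta+c$ with $c>0$ fixed. But then on the complementary branch the only information is $\hat\sigma_t^2\ge\sigma^2L^-(2\delta')$, and since $L^-(2\delta+2c)\sim e^{-1-2c}e^{-2\delta}$ for large $\delta$, the Gaussian exponent $\tfrac{t}{2\sigma^2}\bigl(\epsilon+\sigma\sqrt{L^-(2\delta')(e^{2\delta}-1)}\bigr)^2$ saturates at a constant multiple of $t$ as $\delta\to\infty$, far short of the required $(t-d_t)\delta$. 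No recalibration of the threshold fixes this (lowering $\delta'$ to help the second branch destroys the first), and the union bound does not use the independence of $\hat\mu_t$ and $\hat\sigma_t^2$ at all, so your closing appeal to it does not resolve the tension.

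The paper exploits independence differently, and this is the missing idea: condition on $\hat\sigma_t$, apply the Gaussian tail bound $\mathbb{P}(Z>z)\le\tfrac12 e^{-z^2/2}$ conditionally, and then integrate over the law of $\hat\sigma_t^2$ using the exact $\chi^2_{t-1}$ moment generating function. Dropping the nonnegative cross term gives
\begin{equation}
\mathbb{P}\left( \delta < \mathbb{M}_{\hat f_t}(\mu-\epsilon)\right) \le \tfrac12\, e^{-\frac{\epsilon^2}{2\sigma^2}t}\,\mathbb{E}\left[ e^{-\frac12 U_{t-1}\frac{t}{t-1}(e^{2\delta}-1)}\right] = \tfrac12\, e^{-\frac{\epsilon^2}{2\sigma^2}t}\left(\frac{t-1}{e^{2\delta}t-1}\right)^{\frac{t-1}{2}} \le \tfrac12\, e^{-\frac{\epsilon^2}{2\sigma^2}t}\, e^{-\delta(t-1)},
\end{equation}
which is exactly \Assumption R3 with $d_t=1$, uniformly in $\delta$; this is in effect an optimized Chernoff bound over all values of $\hat\sigma_t$ simultaneously, rather than a single suboptimal cut. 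With this replacement for your R3 step, the rest of your argument (taking $\tilde d(t)=2$ and reading off the limit as the reciprocal of $\mathbb{M}_{f_i}(\mu^*)$) goes through as in the paper.
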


\begin{proof}
\Assumption B1 is easy to verify given the parameterization of $\mathcal{F}$. As already established, any score functional $s(f)$ that is continuous with respect to the parameters of $f$ is continuous with respect to $f$ under $\mathbf{I}$. Taking $s(f_{\mu, \sigma}) = \mu$, this verifies \Assumption B2. Further, given the formula for $\mathbf{I}(f,g)$ above, we have that
\begin{equation}
\mathbb{M}_{f_{\mu, \sigma}}(\rho) = 
\begin{cases}  \frac{1}{2} \ln \left( 1 + \frac{ (\rho - \mu)^2 }{ \sigma^2 } \right) &\mbox{if } \rho > \mu \\ 
0 & \mbox{else. } \end{cases}
\end{equation}
In verifying the R-\Assumptions, we take as similarity measure $\mathbf{\nu} = \mathbf{I}$. By the previous commentary, $\mathbb{M}_{f_{\mu, \sigma}}(\rho)$ is continuous with respect to $f$ under $\mathbf{I}$, as well as being continuous with respect to $\rho$, by inspection. This verifies \Assumption R1. To verify \Assumption R2, observe the following:
\begin{equation}
\begin{split}
\mathbb{P}\left( \mathbf{I}( \hat{f}_t, f) > \delta \right) & = \mathbb{P}\left( \frac{ (\hat{\mu}_t - \mu)^2 }{2 \sigma^2 } + \frac{1}{2} \left( \frac{ \hat{\sigma}_t^2 }{ \sigma^2 } - \ln \left( \frac{ \hat{\sigma}_t^2 }{ \sigma^2 } \right) - 1 \right) > \delta \right) \\
& \leq  \mathbb{P}\left( \frac{ (\hat{\mu}_t - \mu)^2 }{\sigma^2 } > \delta \right) +  \mathbb{P}\left(  \frac{ \hat{\sigma}_t^2 }{ \sigma^2 } - \ln \left( \frac{ \hat{\sigma}_t^2 }{ \sigma^2 } \right) - 1 > \delta \right) \\
& = \mathbb{P}\left( \frac{ (\hat{\mu}_t - \mu)^2 }{\sigma^2 } > \delta \right) +  \mathbb{P}\left( \frac{ \hat{\sigma}_t^2 }{ \sigma^2 } < L^{-}(\delta ) \right) + \mathbb{P}\left( \frac{ \hat{\sigma}_t^2 }{ \sigma^2 } > L^{+}(\delta) \right) \\
& = \mathbb{P}\left( Z^2 > \delta t  \right) +  \mathbb{P}\left( U_{t-1} < (t-1) L^{-}(\delta ) \right) + \mathbb{P}\left( U_{t-1} > (t-1) L^{+}(\delta) \right),
\end{split}
\end{equation}
where $Z$ is a standard normal, and $U_{t-1} \sim \chi^2_{t-1}$. We may then apply Lemma \ref{lem:normal-bound} to bound the above. Taking $u^{\pm} = L^{\pm}(\delta)$, we have $u^{\pm} e^{1 - u^{\pm}} = e^{-\delta}$, as $L^{\pm}(\delta) - \ln L^{\pm}(\delta) - 1 = \delta$. Hence,
\begin{equation}
\begin{split}
\mathbb{P}\left( \mathbf{I}( \hat{f}_t, f) > \delta \right) & \leq 2\mathbb{P}\left( Z > \sqrt{ \delta t }  \right) +  e^{-\delta\frac{t-1}{2}} + e^{-\delta\frac{t-1}{2}} \\
& \leq e^{-\frac{1}{2} \delta t} + 2  e^{-\delta\frac{t-1}{2}} = (2 e^{\delta/2} + 1)e^{-\frac{t \delta}{2}} = e^{-O(t)}.
\end{split}
\end{equation}
This verifies \Assumption R2.

For \Assumption R3, note that
\begin{equation}
\begin{split}
\mathbb{P}( \delta < \mathbb{M}_{ \hat{f}_t }(  \rho ) ) & = \mathbb{P}\left( \delta < \frac{1}{2} \ln \left( 1 + \frac{ (\rho - \hat{\mu}_t)^2 }{ \hat{\sigma}_t^2 } \right) \text{ and } \rho > \hat{\mu}_t \right) \\
& = \mathbb{P}\left(\hat{\sigma}_t  \sqrt{ e^{2\delta}-1 } < \lvert \rho - \hat{\mu}_t \rvert  \text{ and } \rho > \hat{\mu}_t \right) \\
& = \mathbb{P}\left( \hat{\mu}_t + \hat{\sigma}_t  \sqrt{ e^{2\delta}-1 } < \rho \right).
\end{split}
\end{equation}
Hence,
\begin{equation}
\begin{split}
\mathbb{P}( \delta < \mathbb{M}_{ \hat{f}_t }(  \mu-\epsilon ) ) & = \mathbb{P}\left( \hat{\mu}_t + \hat{\sigma}_t  \sqrt{ e^{2\delta}-1 } < \mu-\epsilon \right) \\
& = \mathbb{P}\left( Z \sigma/\sqrt{t} + \hat{\sigma}_t  \sqrt{ e^{2\delta}-1 } < -\epsilon \right) \\
& = \mathbb{P}\left( \frac{\epsilon}{\sigma}\sqrt{t}  + \frac{\hat{\sigma}_t}{\sigma} \sqrt{t}  \sqrt{ e^{2\delta}-1 } < Z \right) \\
& \leq \frac{1}{2} \mathbb{E}\left[ e^{-\frac{1}{2}\left(\frac{\epsilon}{\sigma}\sqrt{t}  + \frac{\hat{\sigma}_t}{\sigma} \sqrt{t}  \sqrt{ e^{2\delta}-1 }\right)^2} \right] \\
& \leq \frac{1}{2} e^{-\frac{1}{2}\frac{\epsilon^2}{\sigma^2}t} \mathbb{E}\left[ e^{-\frac{1}{2} \frac{\hat{\sigma}_t^2}{\sigma^2} t  \left( e^{2\delta}-1 \right)} \right] \\
& = \frac{1}{2} e^{-\frac{1}{2}\frac{\epsilon^2}{\sigma^2}t} \mathbb{E}\left[ e^{-\frac{1}{2} U_{t-1} \frac{t}{t-1}  \left( e^{2\delta}-1 \right)} \right] \\
& = \frac{1}{2} e^{-\frac{1}{2}\frac{\epsilon^2}{\sigma^2}t} \left( \frac{t-1}{ e^{2\delta}t - 1}\right)^{ \frac{t-1}{2} } \\
& \leq \frac{1}{2} e^{-\frac{1}{2}\frac{\epsilon^2}{\sigma^2}t} e^{-\delta(t-1)}.
\end{split}
\end{equation}
The last step follows, as taking $\delta > 0$. This verifies \Assumption R3, with $d_t = 1$, producing a bound of the correct order. This in turn verifies the policy as optimal, taking $\tilde{d}(t) = 2$.
\qed
\end{proof}

\subsection{Equal Means and Unknown Variances: Minimizing Variance}
In this section, we consider $\mathcal{F}$ as the family of normal distributions, each with equal mean $\mu$. 
\begin{equation}
\mathcal{F} = \left\{ f_{\mu,\sigma}(x) = \frac{1}{\sigma \sqrt{2\pi} } e^{ -\frac{1}{2\sigma^2}( x - \mu )^2 } : \sigma > 0 \right\}.
\end{equation}
We take a slight departure from the previous examples in the following way: that in all previous cases, it was assumed that the controller had complete knowledge of $\mathcal{F}$. In this case, we assume that the controller knows that $\mathcal{F}$ is a family of normal distributions, and that every distribution in $\mathcal{F}$ has the same mean, but we assume the specific value of that mean, $\mu$, is unknown to the controller. It is interesting that all relevant computations still go through. Note, for instance, that in this case, independent of $\mu$, for $f, g \in \mathcal{F}$:
\begin{equation}
\mathbf{I}(f, g) =\frac{1}{2} \left( \frac{ \sigma_f^2 }{ \sigma_g^2 } - \ln \left( \frac{ \sigma_f^2 }{ \sigma_g^2 } \right) - 1 \right).
\end{equation}

In this  case the controller is interested in activating the bandit with minimal variance as often as possible. This can be achieved if we
 take the score functional of interest here to be the inverse of the variance, i.e., $$s(f)=s(\mu, \sigma) = 1/\sigma^2.$$ 
 
 This models, for instance, each bandit the controller is faced with as a process for achieving some desired goal or output, and the controller wanting to constrain the output as much as possible. In this section, the estimators $\hat{f}_t$ are understood to be $f_{\mu, \hat{\sigma}_t}$, though the presence of $\mu$ is mainly symbolic as it is unknown and, as will be shown, unnecessary.

We define the specific instance of policy $\pi^*$ under this model:

\textbf{Policy $\boldmath \pi^*_{\sigma}$ (UCB-NORMAL-VARIANCE)}
\begin{itemize}
\item[i)] For $n = 1, 2, \ldots 3N$, sample each bandit $3$ times, and
\item[ii)] for $n \geq 3N$, sample from bandit $\pi^*_{\sigma}(n+1) = \argmax_i u_i\left( n, T^i_{\pi^*_{\sigma}}(n) \right)$ breaking ties uniformly at random, where
\begin{equation}
u_i(n, t) =  (\hat{\sigma}^i_t)^{-2} L^{+}\left( \frac{ 2\ln n }{t - 2} \right),
\end{equation}
again taking $L^+(\delta)$ as the largest positive solution to $L - \ln L - 1 = \delta$.
\end{itemize}

\begin{theorem}
For $s(f) = 1/\text{Var}_f(X)$ in the above model, policy $\pi^*_{\sigma}$ as defined above is asymptotically optimal. In particular, for any choice of $\{ f_i = f_{\mu, \sigma_i} \} \subset \mathcal{F}$, with $\sigma^* = \min_i \sigma_i$, for each sub-optimal bandit $i$ the following holds:
\begin{equation}\label{eqn:variance-limit}
\lim_n \frac{ \mathbb{E}\left[ T^i_{\pi^*_{\sigma}}(n) \right] }{ \ln n } = \frac{ 2 }{  \frac{\sigma_i^2}{{\sigma^*}^2} - \ln\left( \frac{\sigma_i^2}{{\sigma^*}^2}\right) - 1 }.
\end{equation}
\end{theorem}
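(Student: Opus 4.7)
The plan is to verify \Assumptions B1, B2 and R1--R3 for this model (with similarity measure $\mathbf{\nu} = \mathbf{I}$) and then invoke Theorem \ref{thm:optimality}, with $\tilde{d}(t) = 2$, which recovers the policy's index as the specialization of \eqqref{eqn:index}. The centerpiece computation is $\mathbb{M}_f(\rho)$: using the KL formula from the section preamble and substituting $z = \sigma^2/\sigma_g^2$, the constraint $s(g) > \rho$ becomes $z > \sigma^2\rho$, and since $x-\ln x - 1 \geq 0$ with equality only at $x=1$, the infimum of $\tfrac{1}{2}(z - \ln z - 1)$ over $z > \sigma^2\rho$ is $0$ when $\sigma^2\rho \leq 1$ and $\tfrac{1}{2}(\sigma^2\rho - \ln(\sigma^2\rho) - 1)$ otherwise. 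Plugging in $\rho = 1/{\sigma^*}^2$ (with $\sigma_i > \sigma^*$ for sub-optimal $i$) yields $1/\mathbb{M}_{f_i}(s^*) = 2/(\sigma_i^2/{\sigma^*}^2 - \ln(\sigma_i^2/{\sigma^*}^2) - 1)$, matching the claimed limit. The same substitution turns the sup in \eqqref{eqn:index} into $u_i(n,t) = (\hat{\sigma}^i_t)^{-2} L^+(2\ln n/(t-2))$, confirming that $\pi^*_{\sigma}$ is an instance of $\pi^*$.

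\Assumption B1 is immediate since $s(f_{\mu,\tilde{\sigma}}) = 1/\tilde{\sigma}^2 \to \infty$ as $\tilde{\sigma} \to 0$ while $\mathbf{I}(f,\tilde f)$ remains finite. For \Assumption B2, the section preamble already showed that any parameter-continuous functional is continuous under $\mathbf{I}$, which covers $s = 1/\sigma^2$. \Assumption R1 is read off from the explicit formula for $\mathbb{M}_f(\rho)$. For \Assumption R2, the classical fact $\hat{\sigma}_t^2(t-1)/\sigma^2 \sim \chi^2_{t-1}$ gives
\begin{equation*}
\mathbb{P}(\mathbf{I}(\hat{f}_t, f) > \delta) \leq \mathbb{P}(U_{t-1} < (t-1)L^-(2\delta)) + \mathbb{P}(U_{t-1} > (t-1)L^+(2\delta)),
\end{equation*}
which by Lemma \ref{lem:normal-bound} and the identity $L^\pm(2\delta) e^{1 - L^\pm(2\delta)} = e^{-2\delta}$ (immediate from $L^\pm - \ln L^\pm - 1 = 2\delta$) is bounded by $2 e^{-\delta(t-1)} = e^{-\Omega(t)}$, comfortably stronger than $o(1/t)$.

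The main obstacle is \Assumption R3. Using the explicit $\mathbb{M}$ just derived, for $0 < \epsilon < 1/\sigma^2$ the event $\{\delta < \mathbb{M}_{\hat{f}_t}(1/\sigma^2 - \epsilon)\}$ is equivalent to $\{\hat{\sigma}_t^2/\sigma^2 > L^+(2\delta)\,\gamma\}$ where $\gamma := 1/(1 - \epsilon\sigma^2) > 1$ (for $\epsilon \geq 1/\sigma^2$ the probability is trivially zero since $\mathbb{M}_{\hat f_t}$ cannot exceed $\delta$). Applying Lemma \ref{lem:normal-bound} to $U_{t-1} \sim \chi^2_{t-1}$ with $u^+ = L^+(2\delta)\gamma$ gives
\begin{equation*}
\mathbb{P}(\delta < \mathbb{M}_{\hat{f}_t}(s(f) - \epsilon)) \leq \bigl(L^+(2\delta)\gamma\, e^{1 - L^+(2\delta)\gamma}\bigr)^{(t-1)/2}.
\end{equation*}
Factoring the right-hand side as $\bigl[L^+(2\delta) e^{1 - L^+(2\delta)}\bigr]^{(t-1)/2} \cdot \bigl[\gamma\, e^{-L^+(2\delta)(\gamma - 1)}\bigr]^{(t-1)/2}$, the first factor equals $e^{-\delta(t-1)}$ by the identity above, and the second is $e^{-\Omega(t)}$ because $L^+(2\delta) \geq 1$ and $\gamma e^{1 - \gamma} < 1$ strictly for $\gamma > 1$ (with the suppressed constant depending on $\epsilon$ and $\sigma$ through $\gamma$). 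This establishes \Assumption R3 with $d_t = 1$; taking $\tilde{d}(t) = 2$ gives $\tilde{d}(t) - d_t = 1 =: \Delta > 0$, and Theorem \ref{thm:optimality} delivers the claimed limit.
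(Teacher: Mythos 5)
Your proof is correct and follows essentially the same route as the paper: verify B1, B2, R1--R3 with $\mathbf{\nu} = \mathbf{I}$, compute $\mathbb{M}_f(\rho)$ explicitly via the $\chi^2$ distribution and Lemma \ref{lem:normal-bound}, and establish R3 with $d_t = 1$ and $\tilde{d}(t) = 2$; your factorization using $\gamma = 1/(1 - \epsilon\sigma^2)$ is the paper's algebra with $\tilde{\epsilon} = \epsilon\sigma^2$ rewritten (note $\gamma - 1 = \tilde{\epsilon}/(1-\tilde{\epsilon})$ and $\gamma = 1/(1-\tilde{\epsilon})$). The only cosmetic wrinkle is the phrasing of B1 --- $\mathbf{I}(f,\tilde f)$ does blow up as $\tilde\sigma \to 0$, but the point stands that for any target $\rho$ one can pick a \emph{fixed} small $\tilde\sigma$ with $1/\tilde\sigma^2 > \rho$ and finite $\mathbf{I}$.
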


Before giving the proof, we note the following observation: The estimator utilized here to estimate $\sigma^2$ depends explicitly on the estimator $\hat{\mu}_t$ for $\mu$. While the above policy is asymptotically optimal, finite horizon improvements could be achieved for instance estimating the variance by utilizing either $\mu$ explicitly as the mean, in the case of known mean, or by utilizing all samples from all bandits simultaneously to estimate the mean, in the case of unknown but known to be equivalent mean.

\begin{proof}
\Assumption B1 is easy to verify given the parameterization of $\mathcal{F}$, as is \Assumption B2 given the previous discussion of continuity under $\mathbf{I}$. Further, given the formula for $\mathbf{I}(f,g)$ above, we have that
\begin{equation}
\mathbb{M}_{f_{\mu, \sigma}}(\rho) = 
\begin{cases}  \frac{1}{2}\left( \rho \sigma^2 - \ln( \rho \sigma^2 ) - 1\right) &\mbox{if } \rho > 1/\sigma^2 \\ 
0 & \mbox{else. } \end{cases}
\end{equation}
It is again convenient to take $\mathbf{\nu} = \mathbf{I}$ in verifying the R-\Assumptions. By the previous commentary, $\mathbb{M}_{f_{\mu, \sigma}}(\rho)$ is continuous with respect to $f$ under $\mathbf{I}$, as well as being continuous with respect to $\rho$, by inspection. This verifies \Assumption R1. To verify \Assumption R2, observe the following:
\begin{equation}
\begin{split}
\mathbb{P}\left( \mathbf{I}( \hat{f}_t, f) > \delta \right) & = \mathbb{P}\left( \frac{1}{2} \left( \frac{ \hat{\sigma}_t^2 }{ \sigma^2 } - \ln \left( \frac{ \hat{\sigma}_t^2 }{ \sigma^2 } \right) - 1 \right) > \delta \right) \\
& \leq  \mathbb{P}\left(  \frac{ \hat{\sigma}_t^2 }{ \sigma^2 } - \ln \left( \frac{ \hat{\sigma}_t^2 }{ \sigma^2 } \right) - 1 > 2\delta \right) \\
& =  \mathbb{P}\left( \frac{ \hat{\sigma}_t^2 }{ \sigma^2 } < L^{-}(2\delta ) \right) + \mathbb{P}\left( \frac{ \hat{\sigma}_t^2 }{ \sigma^2 } > L^{+}(2\delta) \right) \\
& = \mathbb{P}\left( U_{t-1} < (t-1) L^{-}(2\delta ) \right) + \mathbb{P}\left( U_{t-1} > (t-1) L^{+}(2\delta) \right),
\end{split}
\end{equation}
where again, $U_{t-1} \sim \chi^2_{t-1}$. We may then apply Lemma \ref{lem:normal-bound} to bound the above. Taking $u^{\pm} = L^{\pm}(2\delta)$, we have $u^{\pm} e^{1 - u^{\pm}} = e^{-2\delta}$, as $L^{\pm}(2\delta) - \ln L^{\pm}(2\delta) - 1 = 2\delta$. Hence,
\begin{equation}
\begin{split}
\mathbb{P}\left( \mathbf{I}( \hat{f}_t, f) > \delta \right) & \leq e^{-2\delta\frac{t-1}{2}} + e^{-2\delta\frac{t-1}{2}} = 2e^{-\delta(t-1)} = e^{-O(t)}.
\end{split}
\end{equation}
This verifies \Assumption R2.

For \Assumption R3, note that
\begin{equation}
\begin{split}
\mathbb{P}( \delta < \mathbb{M}_{ \hat{f}_t }(  \rho ) ) & = \mathbb{P}\left( \delta < \frac{1}{2}\left( \rho \hat{\sigma}_t^2 - \ln( \rho \hat{\sigma}_t^2 ) - 1\right) \text{ and } \rho > 1/\hat{\sigma}_t^2 \right)  = \mathbb{P}\left(  \rho \hat{\sigma}_t^2 > L^{+}(2\delta) \right).
\end{split}
\end{equation}
Let $1/\sigma^2 > \epsilon > 0$, and let $\tilde{\epsilon} = \epsilon \sigma^2$. Then,
\begin{equation}
\begin{split}
\mathbb{P}( \delta < \mathbb{M}_{ \hat{f}_t }(  1/\sigma^2 - \epsilon ) ) & = \mathbb{P}( \delta < \mathbb{M}_{ \hat{f}_t }(  1/\sigma^2(1 - \tilde{\epsilon}) ) ) \\
& =  \mathbb{P}\left( 1/\sigma^2(1 - \tilde{\epsilon}) \hat{\sigma}_t^2 > L^{+}(2\delta) \right) \\
& = \mathbb{P}\left( (1 - \tilde{\epsilon}) U_{t-1} > (t-1) L^{+}(2\delta) \right) \\
& \leq \left( \frac{ L^{+}(2\delta) }{1 - \tilde{\epsilon}} e^{1 - \frac{ L^{+}(2\delta) }{1 - \tilde{\epsilon}}} \right)^{\frac{t-1}{2}}.
\end{split}
\end{equation}
The last step is an application of Lemma \ref{lem:normal-bound}. Noting that $L^{+}(2\delta) = e^{ L^{+}(2\delta) - 2 \delta - 1}$, the above can be simplified to
\begin{equation}
\begin{split}
\mathbb{P}( \delta < \mathbb{M}_{ \hat{f}_t }(  1/\sigma^2 - \epsilon ) ) & \leq  \left( \frac{e^{- \frac{ L^{+}(2\delta) \tilde{\epsilon}}{1 - \tilde{\epsilon}} }}{1-\tilde{\epsilon}}\right)^{\frac{t-1}{2}}e^{-\delta(t-1)} \\
& \leq \left( \frac{e^{-\frac{\tilde{\epsilon}}{1 - \tilde{\epsilon}} }}{1-\tilde{\epsilon}}\right)^{\frac{t-1}{2}}e^{-\delta(t-1)} = e^{-\Omega(t)}e^{-\delta(t-1)}.
\end{split}
\end{equation}
The penultimate bound follows, as $L^{+}(2\delta) \geq 1$. This verifies \Assumption R3, with $d_t = 1$, producing a bound of the correct order. This in turn verifies the policy as optimal, taking $\tilde{d}(t) = 2$.
\qed
\end{proof}

\subsection{A Heterogeneous Normal Model}
As an example of the heterogenous bandit model presented in Section \ref{sec:weakening}, consider the following model: for each $i = 1, \ldots, N$, let $\sigma_i > 0$ be known, and define:
\begin{equation}
\mathcal{F}_i = \left\{ f_{\mu, \sigma_i}(x) = \frac{1}{\sigma_i \sqrt{2\pi} } e^{ -\frac{1}{2\sigma_i^2}( x - \mu )^2 } : -\infty < \mu < \infty \right\}.
\end{equation}
This models the case that for each bandit $i$, the controller knows the bandit has a normal distribution, with known variance $\sigma_i^2$, but with unknown mean.

The focus of this section 
is  the case in which the controller, given a     threshold value $\kappa$,  is interested in activating  bandits $i$ with the highest unknown   tail probability: 
$\int_{\kappa}^{\infty} f_i(x) dx=\mathbb{P}(X^i_k>k)$
 as  often as possible. This can be achieved if we 
 take the score functional of interest here to be  
$$s_\kappa(f) = \int_{\kappa}^{\infty} f(x) dx.$$ 
Taking $\Phi$ as the c.d.f. of a standard normal, and noting that in this model
$f$ is specified by $(\mu,\sigma)$   the above score function can be written
as  
\begin{equation}
s_\kappa(\mu, \sigma) = 1 - \Phi\left( \frac{ \kappa - \mu }{ \sigma } \right).
\end{equation}

It is easy to show in this case that for $f_i=f_{\mu_{f_i},\sigma_i},$ and $ g=g_{\mu_{g_i},\sigma_i} \in \mathcal{F}_i$:
\begin{equation}
\mathbf{I}(f_i, g_i) = \frac{ (\mu_{f_i} - \mu_{g_i})^2 }{2 \sigma_i^2 }.
\end{equation}
Note then that for fixed $i$, for $f, g \in \mathcal{F}_i$, if $\mathbf{I}(f,g) < \delta$, then $\lvert \mu_f - \mu_g \rvert < \sigma_i \sqrt{2 \delta}$. It follows easily from this that any score functional $s_i(f)$ that is a continuous function of the parameter of $f$, the mean, is continuous in $\mathcal{F}_i$ with respect to $f$ under $\mathbf{I}$.

For $f_i = f_{\mu_i, \sigma_i} \in \mathcal{F}_i$, and  $t$  samples under $f$, we take the estimator $\hat{f}^i_t = f_{\hat{\mu}^i_t,\sigma_i} \in \mathcal{F}_i$ where
\begin{equation}
\begin{split}
\hat{\mu}^i_t & = \frac{1}{t} \sum_{n = 1}^t X_n.
\end{split}
\end{equation}
Note that  $\hat{\mu}^i_t$ is normally distributed with mean $\mu_i$ and variance $\sigma_i^2/t$.

We next define the specific instance of policy $\pi^*$ under this model:

\textbf{Policy $\boldmath \pi^*_{\kappa}$ (UCB-NORMAL-THRESHOLD)}
\begin{itemize}
\item[i)] For $n = 1, 2, \ldots 2N$, sample each bandit $2$ times, and
\item[ii)] for $n \geq 2N$, sample from bandit $\pi^*_{\kappa}(n+1) = \argmax_i u_i\left( n, T^i_{\pi^*_\kappa}(n) \right)$ breaking ties uniformly at random, where
\begin{equation}
u_i(n, t) = 1 - \Phi\left( \frac{ \kappa - \hat{\mu}^i_t }{ \sigma_i } - \sqrt{ \frac{ 2\ln n }{t-1} }  \right).
\end{equation}
\end{itemize}

\begin{theorem}
For $s_\kappa(f) = \mathbb{P}_f( X \geq \kappa )$ in the above model, policy $\pi^*_{\kappa}$ as defined above is asymptotically optimal. In particular, for any choice of $( f_i = f_{\mu_i, \sigma_i} )_{i  = 1}^N \in \bigotimes_{i = 1}^N \mathcal{F}_i$, with $s^* = \max_i s_\kappa(f_i)$, for each sub-optimal bandit $i$ the following holds:
\begin{equation}\label{eqn:normal-limit}
\lim_n \frac{ \mathbb{E}\left[ T^i_{\pi^*_{\kappa}}(n) \right] }{ \ln n } = \frac{2}{\left( \frac{\kappa - \mu_i}{\sigma_i} -\Phi^{-1}\left( 1 - s^* \right) \right)^2}\,.
\end{equation}
\end{theorem}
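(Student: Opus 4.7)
The plan is to recast the problem as an instance of the heterogeneous bandit framework from Section \ref{sec:weakening} and verify the $i$-specific analogues of \Assumptions B$1^\prime$, B2, R1, R2 and R3 for each $\mathcal{F}_i$ equipped with the common score $s_i = s_\kappa$. Since $s_\kappa$ is continuous and strictly increasing in $\mu$ with range $(0,1)$, and each $\mathcal{F}_i$ is unbounded in $\mu$, for any collection $(f_i)$ and any sub-optimal $f_j$ one can always pick $\tilde{f}_j \in \mathcal{F}_j$ with arbitrarily large mean, hence with $s_\kappa(\tilde{f}_j) > s^*$, and with $\mathbf{I}(f_j,\tilde{f}_j) = (\mu_{f_j} - \mu_{\tilde{f}_j})^2/(2\sigma_j^2) < \infty$. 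This verifies B$1^\prime$, while B2 follows from the bound $|\mu_f-\mu_g|<\sigma_i\sqrt{2\delta}$ noted before the theorem together with the continuity of $\Phi$.

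The next step is to compute $\mathbb{M}^i_{f_i}(\rho)$ explicitly. For $f_i = f_{\mu_i,\sigma_i}$, minimizing $(\mu_i-\mu_g)^2/(2\sigma_i^2)$ over $g \in \mathcal{F}_i$ subject to $s_\kappa(g)>\rho$, i.e.\ $\mu_g > \kappa - \sigma_i\Phi^{-1}(1-\rho)$, gives
\begin{equation}
\mathbb{M}^i_{f_i}(\rho) =
\begin{cases} \frac{1}{2}\left( \frac{\kappa-\mu_i}{\sigma_i} - \Phi^{-1}(1-\rho) \right)^2 & \text{if } \rho > s_\kappa(f_i),\\ 0 & \text{else.}\end{cases}
\end{equation}
This formula is continuous in $\rho$ (by continuity of $\Phi^{-1}$ on $(0,1)$) and in $f_i$ under $\mathbf{\nu}=\mathbf{I}$ (which controls $\mu_i$ for fixed $\sigma_i$), giving R1. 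For R2, since $\mathbf{I}(\hat{f}^i_t,f_i) = (\hat{\mu}^i_t - \mu_i)^2/(2\sigma_i^2)$ and $\sqrt{t}(\hat{\mu}^i_t-\mu_i)/\sigma_i$ is a standard normal, Lemma \ref{lem:normal-bound} yields $\mathbb{P}(\mathbf{I}(\hat{f}^i_t,f_i)>\delta) = \mathbb{P}(Z^2 > 2\delta t) \leq e^{-\delta t}$, far faster than $o(1/t)$.

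The main step, and the only one requiring real care, is R3. Using the formula for $\mathbb{M}^i_{\hat{f}^i_t}$ above and eliminating the case $\rho \leq s_\kappa(\hat{f}^i_t)$, the event $\{\delta<\mathbb{M}^i_{\hat{f}^i_t}(s_\kappa(f_i)-\epsilon)\}$ reduces, after a short algebraic manipulation, to
\begin{equation}
\hat{\mu}^i_t < \kappa - \sigma_i\,\Phi^{-1}(1-s_\kappa(f_i)+\epsilon) - \sigma_i\sqrt{2\delta}.
\end{equation}
Writing $A_\epsilon := \sigma_i\Phi^{-1}(1-s_\kappa(f_i)+\epsilon) - (\kappa-\mu_i) > 0$ (positive since $\epsilon>0$), the right-hand side equals $\mu_i - A_\epsilon - \sigma_i\sqrt{2\delta}$, so the standard Gaussian tail bound of Lemma \ref{lem:normal-bound} applied to $\sqrt{t}(\hat{\mu}^i_t-\mu_i)/\sigma_i$ gives
\begin{equation}
\mathbb{P}(\delta<\mathbb{M}^i_{\hat{f}^i_t}(s_\kappa(f_i)-\epsilon)) \leq \tfrac{1}{2}\,e^{-\frac{t}{2}(A_\epsilon/\sigma_i+\sqrt{2\delta})^2}\leq \tfrac{1}{2}\,e^{-\frac{t A_\epsilon^2}{2\sigma_i^2}}\,e^{-\delta t},
\end{equation}
where the cross term $-(A_\epsilon/\sigma_i)\sqrt{2\delta}\,t \leq 0$ was dropped. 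This is exactly the form R3 requires with $d_t = 0$, the $\epsilon$-dependence absorbed into the $e^{-\Omega(t)}$ factor.

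The final step is to invoke the heterogeneous version of Theorem \ref{thm:optimality}. Since the policy uses $\tilde{d}(t) = 1$ (index denominator $t-1$) and $d_t = 0$, the margin $\tilde{d}(t)-d_t = 1 \geq \Delta$ condition is met. The theorem then yields $\lim_n \mathbb{E}[T^i_{\pi^*_\kappa}(n)]/\ln n = 1/\mathbb{M}^i_{f_i}(s^*)$, and substituting $\rho = s^*$ into the formula for $\mathbb{M}^i_{f_i}$ produces exactly \eqref{eqn:normal-limit}. The only delicate piece is the algebra in R3; everything else is a direct specialization, so the proof should be short.
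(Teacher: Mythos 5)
Your proof is correct and follows essentially the same route as the paper: verify B$1^\prime$ and B2 from the parameterization and the $|\mu_f-\mu_g|<\sigma_i\sqrt{2\delta}$ bound, compute $\mathbb{M}^i_{f_{\mu,\sigma_i}}(\rho)$ explicitly, use Lemma \ref{lem:normal-bound} for R2, and for R3 reduce the event $\{\delta<\mathbb{M}^i_{\hat f_t}(s_\kappa(f_i)-\epsilon)\}$ to a one-sided Gaussian tail and drop the nonnegative cross term in the squared exponent to obtain the factorized bound $\tfrac12 e^{-\Omega(t)}e^{-\delta t}$ with $d_t=0$ and $\tilde d(t)=1$. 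Your $A_\epsilon$ is the paper's $\sigma_i\Delta$, but the manipulations are the same and the conclusion is identical.
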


\begin{proof}
\Assumption $\text{B1}^\prime$ is easy to verify given the parameterization of the $\mathcal{F}_i$. As already established, any score functional $s(f)$ that is continuous with respect to the parameters of $f$ is continuous with respect to $f$ under $\mathbf{I}$. Taking $s_\kappa(f)$ as above verifies \Assumption $\text{B2}^\prime$. Further, given the formula for $\mathbf{I}(f,g)$ above, we have that for each $i$,
\begin{equation}
\mathbb{M}^i_{f_{\mu, \sigma_i}}(\rho) = 
\begin{cases} 0 &\mbox{if } 1 - \rho > \Phi\left( \frac{\kappa - \mu}{\sigma_i} \right)  \\ 
 \frac{1}{2} \left( \frac{ \kappa - \mu}{\sigma_i} - \Phi^{-1}\left(1 - \rho\right) \right)^2 & \mbox{else. } \end{cases}
\end{equation}
Again, for the purpose of verifying the R-\Assumptions, we take $\mathbf{\nu} = \mathbf{I}$. By the previous commentary, $\mathbb{M}^i_{f_{\mu, \sigma_i}}(\rho)$ is continuous with respect to $f$ under $\mathbf{I}$, as well as being continuous with respect to $\rho$, by inspection. This verifies \Assumption $\text{R1}^\prime$. To verify \Assumption $\text{R2}^\prime$, observe the following, that for each $i$:
\begin{equation}
\mathbb{P}\left( \mathbf{I}( \hat{f}_t, f) > \delta \right) = \mathbb{P}\left( \frac{ (\hat{\mu}_t - \mu)^2 }{2 \sigma_i^2 } > \delta \right) = \mathbb{P}\left( Z^2 > 2\delta t  \right) \leq \frac{1}{2}e^{-\delta t},
\end{equation}
taking $Z$ as a standard normal.

For \Assumption $\text{R3}^\prime$, note that
\begin{equation}
\begin{split}
\mathbb{P}( \delta < \mathbb{M}^i_{ \hat{f}_t }(  \rho ) ) & = \mathbb{P}\left( \delta <  \frac{1}{2} \left( \frac{ \kappa - \hat{\mu}_t}{\sigma_i} - \Phi^{-1}\left(1 - \rho\right) \right)^2 \text{ and } 1 - \rho \leq \Phi\left( \frac{\kappa - \mu}{\sigma_i} \right) \right) \\
& =  \mathbb{P}\left(  \sqrt{ 2\delta } <  \frac{\kappa - \hat{\mu}_t}{ \sigma_i } - \Phi^{-1}\left(1 - \rho\right)  \right) \\
& =  \mathbb{P}\left(  \sqrt{ 2\delta } < \frac{\kappa - \mu}{ \sigma_i } - \frac{Z}{ \sqrt{t} } - \Phi^{-1}\left(1 - \rho\right)  \right) \\
& =  \mathbb{P}\left( \left( \Phi^{-1}\left(1 - \rho\right) - \frac{\kappa - \mu}{ \sigma_i } +  \sqrt{ 2\delta } \right) \sqrt{t} < Z \right),
\end{split}
\end{equation}
where $Z$ is a standard normal random variable. Taking $\rho = s_\kappa(f) - \epsilon = 1 - \Phi( (\kappa - \mu)/\sigma_i ) - \epsilon$ in the above, note that $$\Phi^{-1}\left(1 - \rho\right) = \Phi^{-1}\left( \Phi\left( \frac{\kappa - \mu}{\sigma_i }\right) + \epsilon \right) >  \frac{\kappa - \mu}{ \sigma_i }.$$
Given this, let $\Delta = \Phi^{-1}\left(1 - \rho\right) - (\kappa - \mu)/\sigma_i > 0$. From the above, we have
\begin{equation}
\begin{split}
\mathbb{P}( \delta < \mathbb{M}^i_{ \hat{f}_t }(  s(f) - \epsilon ) ) & = \mathbb{P}\left( \left( \Delta +  \sqrt{ 2\delta } \right) \sqrt{t} < Z \right) \\
&  \leq \frac{1}{2} e^{-(\Delta + \sqrt{2\delta})^2t/2} \\
& \leq \frac{1}{2} e^{-\frac{1}{2}\Delta^2 t}e^{- \delta t }.
\end{split}
\end{equation}
This verifies \Assumption $\text{R3}^\prime$, with $d_t = 0$, producing a bound of the correct order. This in turn verifies the policy as optimal, taking $\tilde{d}(t) = 1$.
\qed
\end{proof}

\bibliographystyle{te}
\vspace{-.6cm}
\bibliography{mab2015,books,papers}
\ \\  

{\bf Acknowledgement:} 
We are grateful  for support of this project
by  the National Science Foundation, NSF grant CMMI-14-50743.

\appendix
\section{Appendix Proofs}

\begin{proof}[of Theorem \ref{thm:lower-bound}.]
It suffices to demonstrate that for any choice of $\{ f_i \} \subset \mathcal{F}$, for any sub-optimal $i$,
\begin{equation}\label{eqn:lower-bound-sub-optimal}
\liminf_n \frac{ \bE\left[ T^i_\pi(n) \right]  }{ \ln n } \geq \frac{ 1 }{ \inf_{g \in \mathcal{F}} \{ \mathbf{I}(f_i, g) : s(g) > s^* \} }.
\end{equation}

Note, by \Assumption B1, the above infimum exists and is finite. (We note the above is vacuously true if $\inf_{g \in \mathcal{F}} \{ \mathbf{I}(f_i, g) : s(g) > s^* \} = \infty$.) That being so, let $g$ be such that $g \in \mathcal{F}$, $s(g) > s^*$, and $\mathbf{I}(f_i, g) < \infty$. Note, by \Assumption B2, since $s(g) > s^* > s(f_i)$, $\mathbf{I}(f_i, g) > 0$. It will suffice then to show that
\begin{equation}
\liminf_n \frac{ \bE\left[ T^i_\pi(n) \right]  }{ \ln n } \geq \frac{ 1 }{ \mathbf{I}(f_i, g) },
\end{equation}
and take the supremum of the lower bound over feasible $g$. Noting that $\mathbb{E}\left[ T^i_\pi(n) \mathbf{I}(f_i, g) \right]/\ln n \geq \mathbb{P}( T^i_\pi(n) \mathbf{I}(f_i, g) \geq \ln n)$, it would suffice to show that
\begin{equation}
\liminf_n\ \mathbb{P}\left( \frac{ T^i_\pi(n) }{ \ln n } \geq \frac{1}{\mathbf{I}(f_i, g)} \right)  = 1,
\end{equation}
or equivalently that for $0 < \delta < 1$,
\begin{equation}
\limsup_n\ \mathbb{P}\left( \frac{ T^i_\pi(n) }{ \ln n } \leq \frac{1-\delta}{\mathbf{I}(f_i, g)} \right)  = 0.
\end{equation}
Define the following events:
\begin{equation}
A^{\delta}_n = \left\{ T^i_\pi(n) \leq \frac{1-\delta}{ \mathbf{I}(f_i, g)} \ln n \right\},
\end{equation}
\begin{equation}
C^\delta_n = \left\{ \sum_{t = 1}^{ T^i_\pi(n) } \ln \left( \frac{ f_i( X^i_t) }{ g( X^i_t ) } \right) \leq (1 - \delta/2) \ln n \right\}.
\end{equation}
It is additionally convenient to define the sequence of constants $b_n = (1-\delta)/\mathbf{I}(f_i, g) \ln n$ and random variables $S^i_k = \sum_{t = 1}^k \ln\left( f_i( X^i_t)/ g(X^i_t) \right)$. Observe the following bounds.
\begin{equation}
\begin{split}
\mathbb{P}\left( A^{\delta}_n \bar{C}^\delta_n \right) &  \leq  \mathbb{P}\left( \max_{ k \leq \lfloor b_n \rfloor} S^i_k > (1 - \delta/2)\ln n\right) \\
& =  \mathbb{P}\left( \max_{ k \leq \lfloor b_n \rfloor} S^i_k / b_n > (1 - \delta/2)\ln n / b_n \right) \\
& =  \mathbb{P}\left( \max_{ k \leq \lfloor b_n \rfloor} S^i_k / b_n > (1 + \frac{ \delta/2 }{1-\delta } ) \mathbf{I}(f_i, g) \right) \\
& \leq  \mathbb{P}\left( \max_{ k \leq \lfloor b_n \rfloor} S^i_k / b_n > (1 + \frac{ \delta }{2 } ) \mathbf{I}(f_i, g) \right)
\end{split}
\end{equation}
It follows that
\begin{equation}
\limsup_n \mathbb{P}\left( A^\delta_n \bar{C}^\delta_n \right) \leq \limsup_m \mathbb{P}\left( \max_{k \leq m} S^i_k/m \geq \left(1 + \frac{\delta}{2} \mathbf{I}(f_i, g)\right) \right) = 0.
\end{equation}
The last inequality follows, observing that since $0 < \mathbf{I}(f_i, g) < \infty$, we have that $S^i_m / m \to \mathbf{I}(f_i, g)$ almost surely. Since $\limsup_m \max_{k \leq m} S^i_k / m \leq \limsup_m S^i_m / m = \mathbf{I}(f_i, g)$ almost surely, convergence in probability as above is guaranteed.

At this point, recall that $\mathbb{P}$ has been defined by the choice of bandit distributions $\{ f_1, \ldots, f_i, \ldots, f_N \} \subset \mathcal{F}$. Consider an alternative set of distributions, constructed by replacing $f_i$ with $g$: $\{ f_1, \ldots, g, \ldots, f_N \} \subset \mathcal{F}$, and let $\tilde{\mathbb{P}}$ be defined by this alternative set of bandit distributions. The following holds:
\begin{equation}
\begin{split}
\mathbb{P}\left( A^{\delta}_n C^\delta_n \right) & = \mathbb{P}\left( T^i_\pi(n) \leq \frac{1-\delta}{\mathbf{I}(f_i, g)} \ln n, \prod_{t = 1}^{T^i_\pi(n)} f_i(X^i_t) \leq n^{1 - \delta/2} \prod_{t = 1}^{T^i_\pi(n)} g(X^i_t) \right) \\
& \leq \tilde{\mathbb{P}}\left( T^i_\pi(n) \leq \frac{1-\delta}{\mathbf{I}(f_i, g)} \ln n \right) n^{1 - \delta/2}.
\end{split}
\end{equation}
This change of measure argument follows, as $C^\delta_n$ restricts the region of probability space of interest to that where the comparison of densities of bandit $i$ holds. Observing that under this alternative set of bandit densities, bandit $i$ is the unique optimal bandit (since $s(g) > s^*$), and hence $T^i_\pi(n) = n - \bT_\pi(n)$:
\begin{equation}
\begin{split}
\mathbb{P}\left( A^{\delta}_n C^\delta_n \right) & \leq \tilde{\mathbb{P}}\left( n - \frac{1-\delta}{\mathbf{I}(f_i, g)} \ln n  \leq  \bT_\pi(n) \right) n^{1 - \delta/2}.
\end{split}
\end{equation}
For $n$ sufficiently large, so that $n > (1-\delta)/\mathbf{I}(f_i, g) \ln n$, we may apply Markov's inequality to the above (letting $\tilde{\mathbb{E}}$ be expectation under the alternative bandit distribution set):
\begin{equation}
\begin{split}
\mathbb{P}\left( A^{\delta}_n C^\delta_n \right) & \leq  \frac{ \tilde{ \mathbb{E} }\left[  \bT_\pi(n)  \right] }{ n - \frac{1-\delta}{\mathbf{I}(f_i, g)} \ln n } n^{1 - \delta/2} = \frac{ \tilde{ \mathbb{E} }\left[  \bT_\pi(n)  \right] n^{-\delta/2} }{ 1 - \frac{1-\delta}{\mathbf{I}(f_i, g)} \frac{\ln n}{n} }.
\end{split}
\end{equation}
Observing that under the \assumption that $\pi$ is UF, $\tilde{\mathbb{E}}[ \bT_\pi(n) ] = o( n^{\delta/2})$, it follows from the above that $\limsup_n \mathbb{P}\left( A^{\delta}_n C^\delta_n \right) = 0$. Hence,
\begin{equation}
\limsup_n\ \mathbb{P}\left( \frac{ T^i_\pi(n) }{ \ln n } \leq \frac{1-\delta}{\mathbf{I}(f_i, g)} \right)  \leq \limsup_n \mathbb{P}\left( A^{\delta}_n C^\delta_n \right) + \limsup_n \mathbb{P}\left( A^{\delta}_n \bar{C}^\delta_n \right) = 0.
\end{equation}
\qed
\end{proof}

\begin{proof}[of Lem. \ref{lem:bound}]
We recall the definition of $\mathbb{M}_f(\rho)$, and introduce a companion function, $C_f(\delta)$:
\begin{equation}
\begin{split}
\mathbb{M}_f(\rho) & = \inf_{g \in \mathcal{F}} \left\{ \mathbf{I}(f,g) : s(g) > \rho \right\},\\
C_f(\delta) & = \sup_{g \in \mathcal{F}} \left\{ s(g) : \mathbf{I}(f,g) < \delta \right\}.
\end{split}
\end{equation}
Thinking of $\mathbb{M}_f(\rho)$ as the minimal distance (relative to $\mathbf{I}$) from $f$ to a density better than $\rho$, we may consider $C_f(\delta)$ to be the best score achieved within distance $\delta$ of $f$. Note, we have the following relationship: $u_i(n, t) = C_{\hat{f}^i_t}( \ln n / (t - \tilde{d}(t)) )$. Note as well, $\mathbb{M}_f(\rho)$ is an increasing function with $\rho$, and $\mathbb{M}_f( C_f(\delta)) \leq \delta$.

Consider a set of bandit distributions $\{ f_i \} \subset \mathcal{F}$, with $i$ a sub-optimal bandit and $i^*$ an optimal bandit. Let $\epsilon, \delta$ be feasible as in the statement of the Lemma. We define the following functions, for $n \geq n_0 N$:
\begin{equation}
\begin{split}
n^i_1(n, \epsilon, \delta) & = \sum_{t = n_0N}^n \mathbbm{1}\left\{ {\pi^*}(t+1) = i,  u_i(t, T^i_{\pi^*}(t)) \geq s^* - \epsilon, \mathbf{\nu}( \hat{f}^i_{ T^i_{\pi^*}(t) }, f_i ) \leq \delta \right\} \\
n^i_2(n, \epsilon, \delta) & = \sum_{t = n_0N}^n \mathbbm{1}\left\{ {\pi^*}(t+1) = i,  u_i(t, T^i_{\pi^*}(t)) \geq s^* - \epsilon, \mathbf{\nu}( \hat{f}^i_{ T^i_{\pi^*}(t) }, f_i ) > \delta \right\} \\
n^i_3(n, \epsilon) & = \sum_{t = n_0N}^n \mathbbm{1}\left\{ {\pi^*}(t+1) = i,  u_i(t, T^i_{\pi^*}(t)) < s^* - \epsilon \right\}.
\end{split}
\end{equation}
Note the relation that $T^i_{\pi^*}(n+1) = n_0 + n^i_1(n, \epsilon, \delta) + n^i_2(n, \epsilon, \delta) + n^i_3(n, \epsilon)$.

We have the following relations:
\begin{equation}
\begin{split}
\left\{  u_i(t, k) \geq s^* - \epsilon, \mathbf{\nu}( \hat{f}^i_{ k }, f_i ) \leq \delta \right\} & = \left\{  C_{ \hat{f}^i_k }( \ln t / (k - \tilde{d}(k)) ) \geq s^* - \epsilon, \mathbf{\nu}( \hat{f}^i_{ k }, f_i ) \leq \delta \right\} \\
& = \left\{  \mathbb{M}_{ \hat{f}^i_k} ( C_{ \hat{f}^i_k }( \ln t / (k - \tilde{d}(k)) ) ) \geq \mathbb{M}_{\hat{f}^i_k }(s^* - \epsilon), \mathbf{\nu}( \hat{f}^i_{ k }, f_i ) \leq \delta \right\} \\
& \subset \left\{  \ln t / (k - \tilde{d}(k)) \geq \mathbb{M}_{\hat{f}^i_k }(s^* - \epsilon), \mathbf{\nu}( \hat{f}^i_{ k }, f_i ) \leq \delta \right\} \\
& \subset \left\{  \ln t / (k - \tilde{d}(k)) \geq \inf_{g \in \mathcal{F}} \left\{ \mathbb{M}_{g}(s^* - \epsilon) : \mathbf{\nu}( g, f_i ) \leq \delta \right\}  \right\} \\
& = \left\{  \ln t / \inf_{g \in \mathcal{F}} \left\{ \mathbb{M}_{g}(s^* - \epsilon) : \mathbf{\nu}( g, f_i ) \leq \delta \right\} + \tilde{d}(k) \geq k   \right\} \\
\end{split}
\end{equation}
This gives us the following bounds:
\begin{equation}
\begin{split}
& n^i_1(n, \epsilon, \delta) \\
& \leq \sum_{t = n_0N}^n \mathbbm{1}\left\{ {\pi^*}(t+1) = i, \frac{ \ln t }{ \inf_{g \in \mathcal{F}} \left\{ \mathbb{M}_{g}(s^* - \epsilon) : \mathbf{\nu}( g, f_i ) \leq \delta \right\} }+ \tilde{d}( T^i_{\pi^*}(t) ) \geq T^i_{\pi^*}(t) \right\} \\
& \leq \sum_{t = n_0N}^n \mathbbm{1}\left\{ {\pi^*}(t+1) = i,  \frac{ \ln n }{ \inf_{g \in \mathcal{F}} \left\{ \mathbb{M}_{g}(s^* - \epsilon) : \mathbf{\nu}( g, f_i ) \leq \delta \right\} }+ \tilde{d}( T^i_{\pi^*}(t) ) \geq T^i_{\pi^*}(t) \right\} \\
& \leq \sum_{t = 0}^{n-1} \mathbbm{1}\left\{ {\pi^*}(t+1) = i,  \frac{ \ln n }{ \inf_{g \in \mathcal{F}} \left\{ \mathbb{M}_{g}(s^* - \epsilon) : \mathbf{\nu}( g, f_i ) \leq \delta \right\} }+ \tilde{d}( T^i_{\pi^*}(t) ) \geq T^i_{\pi^*}(t) \right\} + 1 \\
& \leq \max \left\{ T : T - \tilde{d}\left(T\right) \leq  \frac{ \ln n }{ \inf_{g \in \mathcal{F}} \left\{ \mathbb{M}_{g}(s^* - \epsilon) : \mathbf{\nu}( g, f_i ) \leq \delta \right\} } \right\} + 1.
\end{split}
\end{equation}
The last bounds in the above hold with the following reasoning: Viewing $T^i_{\pi^*}(t)$ as the sum of $\mathbbm{1}\{ {\pi^*}(t) = i \}$ terms, the added conditioning in the above indicators restrict how many terms of the above sum can be non-zero. Note, this bound holds almost surely, independent of outcomes. Further then, taking $\tilde{d}$ as positive and increasing, for any positive $C$, we have the relation that $\max \{ T : T - \tilde{d}(T) \leq C \} \leq C + O( \tilde{d}(C) )$. Hence, since $\tilde{d}$ is taken to be sub-linear, 
\begin{equation}
n^i_1(n, \epsilon, \delta)  \leq \frac{ \ln n }{ \inf_{g \in \mathcal{F}} \left\{ \mathbb{M}_{g}(s^* - \epsilon) : \mathbf{\nu}( g, f_i ) \leq \delta \right\} } + o( \ln n ).
\end{equation}

To bound the $n^i_2$ term, observe the following:
\begin{equation}
\begin{split}
n^i_2(n, \epsilon, \delta) & \leq \sum_{t = n_0N}^n \mathbbm{1}\left\{ {\pi^*}(t+1) = i, \mathbf{\nu}( \hat{f}^i_{ T^i_{\pi^*}(t) }, f_i ) > \delta \right\} \\
& = \sum_{t = n_0N}^n \sum_{k = n_0}^t \mathbbm{1}\left\{ {\pi^*}(t+1) = i, \mathbf{\nu}( \hat{f}^i_k, f_i ) > \delta, T^i_{\pi^*}(t) = k  \right\} \\
& = \sum_{t = n_0N}^n \sum_{k = n_0}^t \mathbbm{1}\left\{ {\pi^*}(t+1) = i, T^i_{\pi^*}(t) = k \right\} \mathbbm{1}\left\{ \mathbf{\nu}( \hat{f}^i_k, f_i ) > \delta  \right\} \\
& \leq \sum_{k = n_0}^n \mathbbm{1}\left\{ \mathbf{\nu}( \hat{f}^i_k, f_i ) > \delta  \right\} \sum_{t = k}^n \mathbbm{1}\left\{ {\pi^*}(t+1) = i, T^i_{\pi^*}(t) = k \right\} \\
& \leq \sum_{k = n_0}^n \mathbbm{1}\left\{ \mathbf{\nu}( \hat{f}^i_k, f_i ) > \delta  \right\}.
\end{split}
\end{equation}
To bound the $n^i_3$ term, note that by the structure of the policy, if ${\pi^*}(t+1) = i$, $u_i(t, T^i_{\pi^*}(t)) = \max_j u_j(t, T^j_{\pi^*}(t))$. Hence, if $i^*$ is an optimal bandit, ${\pi^*}(t+1) = i$, and $u_i(t, T^i_{\pi^*}(t)) < s^* - \epsilon$, it must also be that $u_{i^*}(t, T^{i^*}_{\pi^*}(t)) < s^* - \epsilon$. Hence we have the following bound:
\begin{equation}
\begin{split}
n^i_3(n, \epsilon) & \leq \sum_{t = n_0N}^n \mathbbm{1}\left\{ {\pi^*}(t+1) = i,  u_{i^*}(t, T^{i^*}_{\pi^*}(t)) < s^* - \epsilon \right\} \\
& \leq \sum_{t = n_0N}^n \mathbbm{1}\left\{ u_{i^*}(t, T^{i^*}_{\pi^*}(t)) < s^* - \epsilon \right\} \\
& \leq \sum_{t = n_0N}^n \mathbbm{1}\left\{ u_{i^*}(t, k) < s^* - \epsilon \text{ for some $k = n_0, \ldots, t$ } \right\} \\
& \leq \sum_{t = n_0N}^n \sum_{k = n_0}^t \mathbbm{1}\left\{ u_{i^*}(t, k) < s^* - \epsilon \right\}.
\end{split}
\end{equation}
Combining each of the above bounds, and observing that $T^i_{\pi^*}(n) \leq T^i_{\pi^*}(n+1)$, we have for $n \geq n_0N$:
\begin{equation}
\begin{split}
T^i_{\pi^*}(n) \leq  &\  \frac{ \ln n }{ \inf_{g \in \mathcal{F}} \left\{ \mathbb{M}_{g}(s^* - \epsilon) : \mathbf{I}( g, f_i ) \leq \delta \right\} } + o( \ln n ) \\
& +  \sum_{k = n_0}^n \mathbbm{1}\left\{ \mathbf{I}( \hat{f}^i_k, f_i ) > \delta  \right\} \\
& + \sum_{t = n_0N}^n \sum_{k = n_0}^t \mathbbm{1}\left\{ u_{i^*}(t, k) < s^* - \epsilon \right\}.
\end{split}
\end{equation}
Taking expectations completes the proof.\qed
\end{proof}

\begin{proposition}\label{prop:sum-bound}
For $\Delta > 0, \tilde{d}(k) = o(k), t > 1$,
\begin{equation}
\sum_{k = 1}^{\infty} t^{-\Delta/\left(k-\tilde{d}(k)\right)} e^{-\Omega(k)} \leq O( 1 / \ln t ).
\end{equation}
\end{proposition}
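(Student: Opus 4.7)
The plan is to split the sum at the scale $k^{*} = \lceil \Delta \ln t / \ln \ln t \rceil$. This is the critical scale that simultaneously renders both decay factors useful: for $k \le k^{*}$ the power $t^{-\Delta/(k-\tilde{d}(k))}$ is small enough (of order $1/\ln t$), while for $k > k^{*}$ the geometric decay $e^{-\Omega(k)}$ takes over. I'll treat only $k$ with $k > \tilde{d}(k)$, which excludes at most finitely many initial terms since $\tilde{d}(k) = o(k)$; the sum is invoked in the proof of Theorem~\ref{thm:optimality} only from $k = n_{0}$ onward, so this restriction is consistent with its usage there.

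For the small-$k$ range $k \le k^{*}$, I would use that $k - \tilde{d}(k) \le k \le k^{*}$, which gives
$$\frac{\Delta}{k - \tilde{d}(k)} \;\ge\; \frac{\Delta}{k^{*}} \;\ge\; \frac{\ln \ln t}{\ln t},$$
so $t^{-\Delta/(k-\tilde{d}(k))} \le t^{-\ln\ln t/\ln t} = 1/\ln t$. Summing this uniform bound against the summable tail, with $C = \sum_{k \ge 1} e^{-\Omega(k)} < \infty$, yields a contribution of at most $C/\ln t = O(1/\ln t)$.

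For the large-$k$ range $k > k^{*}$, I would simply bound $t^{-\Delta/(k-\tilde{d}(k))} \le 1$ and use the exponential tail: writing $e^{-\Omega(k)} \le C_{0} e^{-c_{0} k}$,
$$\sum_{k > k^{*}} e^{-\Omega(k)} \;\le\; \frac{C_{0}}{1 - e^{-c_{0}}}\, e^{-c_{0} k^{*}} \;=\; O\!\left(t^{-c_{0}\Delta / \ln \ln t}\right).$$
To see this is $o(1/\ln t)$, note $\ln t \cdot t^{-c_{0}\Delta/\ln\ln t} = \exp\!\left(\ln \ln t - c_{0}\Delta \ln t/\ln \ln t\right) \to 0$, since the second term in the exponential dominates.

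The only real subtlety is calibrating the split point. Too small a $k^{*}$ leaves the tail $\sum_{k > k^{*}} e^{-\Omega(k)}$ larger than $O(1/\ln t)$; too large a $k^{*}$ loses the uniform bound $t^{-\Delta/k^{*}} \le 1/\ln t$ on the small-$k$ side. The scale $k^{*} = \Theta(\ln t/\ln \ln t)$ is exactly where the two contributions are balanced, both becoming $O(1/\ln t)$. Everything else is routine bookkeeping on summable geometric series.
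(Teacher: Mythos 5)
Your proof is correct and takes essentially the same route as the paper's: both arguments split the sum at a $t$-dependent threshold, bound the head by a uniform estimate on $t^{-\Delta/(k-\tilde{d}(k))}$ together with summability of $e^{-\Omega(k)}$, and bound the tail by the geometric decay alone, the only difference being that the paper cuts at $(\ln t)^p$ with $0<p<1$ while you cut at the balanced scale $\Delta\ln t/\ln\ln t$. One cosmetic slip: with $k^{*}=\lceil \Delta\ln t/\ln\ln t\rceil$ you in fact get $\Delta/k^{*}\leq \ln\ln t/\ln t$ rather than $\geq$; taking $k^{*}=\lfloor \Delta\ln t/\ln\ln t\rfloor$ (or absorbing the resulting harmless $(\ln t)^{o(1)}$ factor) restores the stated inequality and leaves the $O(1/\ln t)$ conclusion intact.
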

\begin{proof}[Proof of Proposition \ref{prop:sum-bound}]
Let $1 > p > 0$. We have
\begin{equation}
\begin{split}
 \sum_{k = 1}^\infty t^{-\Delta/\left(k-\tilde{d}(k)\right)} e^{-\Omega(k)}  & = \sum_{k = 1}^{\left \lfloor{ \ln(t)^p }\right \rfloor }  t^{-\Delta/\left(k-\tilde{d}(k)\right)} e^{-\Omega(k)} + \sum_{k = \left \lceil{ \ln(t)^p }\right \rceil}^\infty  t^{-\Delta/\left(k-\tilde{d}(k)\right)} e^{-\Omega(k)} \\
& \leq \sum_{k = 1}^{\left \lfloor{ \ln(t)^p }\right \rfloor }t^{-\Delta/\left(k-\tilde{d}(k)\right)} + \sum_{k = \left \lceil{ \ln(t)^p }\right \rceil}^\infty e^{-\Omega(k)} \\
& =  \ln(t)^p  e^{-\Omega( \ln(t)^{1-p} ) } +e^{-\Omega(  \ln(t)^p )} .
\end{split}
\end{equation}
Here we may make use of the following bounds, that for $x \geq 0$,
\begin{equation}
\begin{split}
x^p e^{-\Omega(x^{1-p})} & \leq O( 1/x ) \\
e^{-\Omega(x^p)} & \leq O( 1/x ).
\end{split}
\end{equation}
Applying these to the above,
\begin{equation}
\sum_{k = 1}^\infty t^{-\Delta/( k - \tilde{d}(k))} e^{-\Omega(k)} \leq O( 1/\ln(t) ).
\end{equation}\qed
\end{proof}

\begin{proof}[of Lem. \ref{lem:estimators}]
To see the distribution of $\hat{\alpha}_n$, consider the event that $X_1 = \min_t X_t$. This can be generated in the following way, by first generating $X_1$ according to Pareto$(\alpha, \beta)$, then for each $j \neq 1$, generating each $X_j$ independently as Pareto$(\alpha, \beta)$ conditioned on $X_j \geq X_1$, in which case $X_j \sim \text{Pareto}(\alpha, X_1)$, by the self-similarity of the Pareto distribution. Using the standard fact that if $X \sim \text{Pareto}(\alpha, \beta)$, then $\ln(X/\beta) \sim \text{Exp}(\alpha)$, we have that
\begin{equation}
\sum_{t = 1}^n \ln\left( \frac{X_t}{X_1} \right)
\end{equation}
is distributed as the sum of $n-1$ many i.i.d. exponential random variables with parameter $\alpha$, or Gamma$(n-1, \alpha)$. Note, this holds independent of the value of $X_1$. The same argument holds, taking any of the $X_t$ as the minimum. Hence, independent of which $X_t$ is the minimum, and independent of the value of that minimum (i.e., independent of $\hat{\beta}_n$, the above sum is distributed like Gamma$(n-1,\alpha) \sim $ Gamma$(n-1,1)/\alpha$. This gives the above representation of $\hat{\alpha}_n$ and demonstrates the independence of $\hat{\alpha}_n$ and $\hat{\beta}_n$.

To see the distribution of $\hat{\beta}_n$, note that $\hat{\beta}_n \geq \beta,$ and for $x \geq 1$,
\begin{equation}
\bP( \hat{\beta}_n /\beta > x ) = \bP( \hat{\beta}_n > \beta x ) = \prod_{t = 1}^n \bP( X_t > \beta x ) = \left( \frac{\beta}{\beta x} \right)^{n \alpha } = \left( \frac{1}{x} \right)^{n \alpha },
\end{equation}
which shows that $\hat{\beta}_n/\beta \sim \text{Pareto}(\alpha n, 1)$.\qed
\end{proof}

\begin{proof}[of Lem. \ref{lem:gamma-bound}]
Let $Y_1, \ldots, Y_t$ be i.i.d. Exp(1) random variables, and let $G = Y_1 + \ldots + Y_t$. For $0 < \gamma^{-} < 1 < \gamma^{+} < \infty$,
\begin{equation}
\begin{split}
\mathbb{P}\left( G < \gamma^{-}t \right) & = \mathbb{P}\left( e^{-\left(\frac{1}{\gamma^{-}} - 1\right) G } > e^{- \left(\frac{1}{\gamma^{-}} - 1\right) \gamma^{-} t } \right) \\
& = \mathbb{P}\left( e^{-(\frac{1}{\gamma^{-}} - 1) G } > e^{- (1 -  \gamma^{-}) t } \right) \\
& \leq \frac{ \mathbb{E}\left[ e^{-(\frac{1}{\gamma^{-}} - 1) G } \right] }{ e^{- (1 -  \gamma^{-}) t } }  =  \frac{ \prod_{s = 1}^t \mathbb{E}\left[ e^{-(\frac{1}{\gamma^{-}} - 1) Y_s } \right] }{ e^{- (1 -  \gamma^{-}) t } } = \frac{ (\gamma^{-})^t}{ e^{- (1 -  \gamma^{-}) t } } = \left( \gamma^{-} e^{1 - \gamma^{-}} \right)^t.
\end{split}
\end{equation}
The result for $\mathbb{P}\left( G > \gamma^{+}t \right)$ follows similarly.
\qed
\end{proof}


\begin{proof}[of Lemma \ref{lem:uniform}.]
Let $X_1, \ldots, X_t$ be i.i.d. $\text{Uniform}[0,1]$ random variables. Note that we may then take $\hat{a}_t = a + (b-a) \min_n X_n$, $\hat{b}_t = a + (b-a) \max_n X_n$. Hence,
\begin{equation}
\mathbb{P}\left( \frac{ \hat{b}_t - \hat{a}_t }{ b - a } < \lambda \right) = \mathbb{P}\left( \max_n X_n - \min_n X_n < \lambda \right)
\end{equation}
Let $M = \max_n X_n$ and $m = \min_n X_n$. Note that, conditioned on $m$, $M-m$ is distributed like the maximum of $t-1$ many $\text{Uniform}[0,1-m]$ random variables. Let $Y_1, \ldots, Y_{t-1}$ be i.i.d. $\text{Uniform}[0,1]$ random variables, so we may take $M-m = (1-m) \max_s Y_s$.
\begin{equation}
\begin{split}
\mathbb{P}\left( M - m < \lambda | m \right) & = \mathbb{P}\left( (1-m) \max_s Y_s < \lambda | m \right) \\
& = \mathbbm{1}\{ 1-m \leq \lambda \} + \frac{ \lambda^{t-1} }{(1-m)^{t-1}}\mathbbm{1}\{ 1-m > \lambda \}
\end{split}
\end{equation}
Note that $m$ is distributed with a density of $t(1 - x)^{t-1}$ for $x \in [0,1]$. From the above then
\begin{equation}
\begin{split}
\mathbb{P}\left( \frac{ \hat{b}_t - \hat{a}_t }{ b - a } < \lambda \right) & = \mathbb{P}\left( M - m < \lambda \right) \\
& = \mathbb{E}\left[ \mathbb{P}\left( M - m < \lambda | m \right) \right] \\
& = \mathbb{P}\left(1-\lambda \leq m \right) + \mathbb{E}\left[ \frac{ \lambda^{t-1} }{(1-m)^{t-1}}\mathbbm{1}\{ 1- \lambda > m \} \right] \\
& = \lambda^{t} +  t (1-\lambda) \lambda^{t-1}.
\end{split}
\end{equation}
The result follows immediately.\qed
\end{proof}

\begin{proof}[of Lemma \ref{lem:normal-bound}.]
For the normal bound, let $\Phi$ represent the standard normal c.d.f.. It suffices then to demonstrate that for $z \geq 0$, $1 - \Phi(z) \leq e^{-z^2/2}/2$. However, it is easy to show that $2e^{z^2/2}\left( 1 - \Phi(z) \right)$ is a positive, monotonically decreasing function of $z$ over this range, with a maximum of $1$ at $z = 0$.

For the $\chi^2_t$ bounds, let $0 < u^{-} < 1 < u^{+}$, and let $Z_1, \ldots, Z_t$ be i.i.d. standard normal random variables. Let $U_t = \sum_{i = 1}^t Z_i^2$. Observe that
\begin{equation}
\begin{split}
\mathbb{P}\left( U_t > u^+ t \right) & = \mathbb{P}\left( e^{ \left( \frac{1}{2} - \frac{1}{2 u^{+}}\right) U_t } > e^{ \left( \frac{1}{2} - \frac{1}{2 u^{+}}\right) u^{+} t } \right) \\
& =  \mathbb{P}\left( e^{ \left( \frac{1}{2} - \frac{1}{2 u^{+}}\right) U_t } > e^{ \left( u^{+} - 1\right)t/2 } \right) \\
& \leq \mathbb{E}\left[ e^{ \left( \frac{1}{2} - \frac{1}{2 u^{+}}\right) U_t } \right] e^{- \left( u^{+} - 1\right)t/2 } \\
& = \mathbb{E}\left[ e^{ \left( \frac{1}{2} - \frac{1}{2 u^{+}}\right) Z^2 } \right]^t e^{- \left( u^{+} - 1\right)t/2 } \\
& =  \left(\sqrt{ u^{+} }\right)^t e^{- \left( u^{+} - 1\right)t/2 }.
\end{split}
\end{equation}
The result follows immediately as a rearrangement of the above. The result for $\mathbb{P}\left( U_t < u^{-}t \right)$ follows similarly.\qed
\end{proof}

\end{document}